\documentclass[onecolumn]{nestpaper}
\usepackage{nicefrac}
\usepackage{amsfonts}
\usepackage{algorithm}
\usepackage{algpseudocodex}
\usepackage{textcomp}
\usepackage{import} 
\usepackage{color} 
\usepackage{overpic} 
\usepackage{siunitx} 

\newtheorem{definition}{Definition}
\newtheorem{problem}{Problem}
\newtheorem{theorem}{Theorem}
\newtheorem{lemma}{Lemma}
\newtheorem{corollary}{Corollary}
\newtheorem{remark}{Remark}

\let\cite\citep

\newcommand{\wdf}{\textbf{F}}
\newcommand{\wdfl}{\textbf{FL}}
\newcommand{\wdrl}{\textbf{RL}}
\newcommand{\wdr}{\textbf{R}}
\newcommand{\wdrr}{\textbf{RR}}
\newcommand{\wdfr}{\textbf{FR}}

\newcommand{\etal}{\textit{et al.}}
\newcommand{\fig}{Fig.\,}

\author{
  Khai Yi Chin\\
  Amazon Robotics \\
  \email{khaiyic@amazon.com}
  \and
  Tingwei Meng\\
  Amazon Robotics \\
  \email{tingweim@amazon.com}
  \and
  Zhe Chen\\
  Amazon Robotics \\
  \email{zhecm@amazon.com}
  \and
  Daniel Bassett \\
  Amazon Robotics \\
  \email{basjdani@amazon.com}
  \and
  Yuri Ivanov \\
  Amazon Robotics \\
  \email{yuriivan@amazon.com}}

\begin{document}

\title{Huddle: Parallel Shape Assembly using Decentralized, Minimalistic Robots}

\submitted{DARS 2026}
\date{}

\maketitle
\begin{abstract}
We propose a novel algorithm for forming arbitrarily shaped assemblies using decentralized robots. By relying on local interactions, the algorithm ensures there are no unreachable states or gaps in the assembly, which are global properties. The in-assembly robots attract passing-by robots into expanding the assembly via a simple implementation of signaling and alignment. Our approach is minimalistic, requiring only communication between attached, immediate neighbors. It is motion-agnostic and requires no pose localization, enabling asynchronous and order-independent assembly. We prove the algorithm's correctness and demonstrate its effectiveness in forming a 107-robot assembly.
\end{abstract}

\section{Introduction}\label{sec:introduction}

Multi-robot systems are often a logical alternative to single-robot systems in a variety of applications. Examples include transporting arbitrarily shaped objects~\cite{tuciCooperativeObjectTransport2018} and assembling structures on demand~\cite{oharaSelfassemblySwarmAutonomous2014}. In such cases, a multi-robot system offers speed, scalability, and robustness in its parallelized execution. This is further enhanced when the system is decentralized, as its centralized counterpart is associated with higher communication and computational burden from synchronizing robot activities. However, local robot interactions present a significant challenge, as they limit the range of conceivable emergent behaviors. The onus thus falls on designing correct local behaviors that lead to the desired global outcome.

In this work, we explore the self-assembly of decentralized, minimalistic robots into arbitrary 2-D shapes. To that end, we propose \textit{Huddle}, a parallel shape assembly algorithm. Huddle relies on simple signaling to enable homogeneous robots to self-assemble without specialized motion control strategies or pose localization. Starting with a single robot, the assembly gradually takes shape as in-assembly robots signal at specific openings to attract passing-by robots. Attachments to the assembly are anonymous and order-independent. Inter-robot communication is required only between in-assembly robots that are immediate neighbors.

The key novelty of Huddle is its minimalism, owing in part to its rule-based formulation, but also to its agnosticism towards motion control strategies. Inter-robot information exchange is equally modest, requiring only two integers shared at each time step. Moreover, Huddle only needs the perimeter of the desired shape to work, as we prove in our theoretical analysis. We also demonstrate the minimalism of Huddle using a physics-based simulated experiment where 107 robots assemble into a highly non-convex shape.

\section{Related Work}\label{sec:related_work}

Research on multi-robot self-assembly ranges from hardware design to algorithmic development~\cite{brayRecentDevelopmentsSelfAssembling2023}. Our work contributes to the algorithmic side: Huddle is a motion-agnostic, rule-based algorithm that enables robots to self-assemble using local interactions, providing global shape guarantees without localization requirements.

Due to their minimalism, rule-based methods are common in assembly planning. Notable studies include the self-assembly of Kilobots (hop count propagation with connectivity constraints)~\cite{rubensteinProgrammableSelfassemblyThousandrobot2014,gauciProgrammableSelfdisassemblyShape2018} and s-bots (hand-designed pattern extension rules)~\cite{grossAutonomousSelfAssemblySwarmBots2006,ogradySWARMORPHMultirobotMorphogenesis2009}, as well as the reconfiguration of SMORES-EP robots (centralized graph matching)~\cite{liuDistributedReconfigurationPlanning2019,liuSMORESEPModularRobot2023}. Salda\~na \etal{} used an assembly tree to guide square robots into specific docking sequences and locations~\cite{saldanaDecentralizedAlgorithmAssembling2017}. Using \textit{assembler} robots, Werfel \etal{} leveraged stigmergy through construction blocks that store and communicate assembly information~\cite{werfelDistributedConstructionMobile2006}. With 3-D self-assembly blocks, Stoy grew structures to match CAD models using a cellular automaton~\cite{stoyUsingCellularAutomata2006}, while Feshbach and Sung proposed a sequential method for assemblies to reshape themselves~\cite{feshbachReconfiguringNonConvexHoles2021}. Tolley and Lipson sampled attachment sites approved by a centralized algorithm to address attachment stochasticity~\cite{tolleyOnlineAssemblyPlanning2011}. Besides sharing features with such works---\textit{e.g.,} promotion of attachment sites, signaling and self-alignment mechanisms---Huddle offers decentralized, parallel shape assembly without connectivity, motion, or identification requirements.


Another popular approach employs optimization techniques. Matthey \etal{} synthesized robot controllers using the Chemical Reaction Network framework for the assembly of heterogeneous parts~\cite{mattheyStochasticStrategiesSwarm2009}. Sun \etal{} used the mean-shift algorithm in their shape assembly strategy, which optimizes density functions online~\cite{sunMeanshiftExplorationShape2023}. Methods built upon artificial potential fields are widespread as well, particularly in swarm robotics. This includes assembly of robots based on specifications from 2-D point clouds~\cite{liDecentralizedProgressiveShape2019} or from analytic functions~\cite{pinciroliSelfOrganizingScalableShape2008,yangParallelShapeFormation2022}. Our approach offers simplicity both in computational demands and in shape specification, requiring only perimeter coordinates.
\section{Problem Statement}\label{sec:problem_statement}
This work addresses the question: \textit{How do we create an arbitrarily shaped, hole-free assembly using decentralized, minimalistic robots without specific motion control strategies?} By being motion-agnostic, we can be adaptable to application constraints. For example, centralized waypoint computation may be feasible in a closed warehouse environment, but less so in a large, outdoor setting where decentralized methods scale better.

Our robot has a hexagonal shape, permitting up to $6$ neighbor attachments, with a signal emitter-receiver pair on each wall (hexagon side) for signaling and communication. The signaling medium is kept abstract to maintain generality; practical implementations could involve optical or sonic devices. Each wall also has a passive attachment mechanism (\textit{e.g.,} magnets~\cite{saldanaDecentralizedAlgorithmAssembling2017}), a proximity sensor for obstacle avoidance, and the ability to identify wall occupancy.

Our approach is to use in-assembly robots to attract passing-by robots, starting with a root robot. The finalized assembly must match the target shape $S$ exactly---provided that the shape is decomposable into hexagonal components. We assume passing-by robots can detect and follow the signals transmitted by in-assembly robots.

\begin{figure}
    \centering
    \begin{subfigure}{0.35\columnwidth}
        \centering
        \def\svgwidth{0.9\textwidth}
        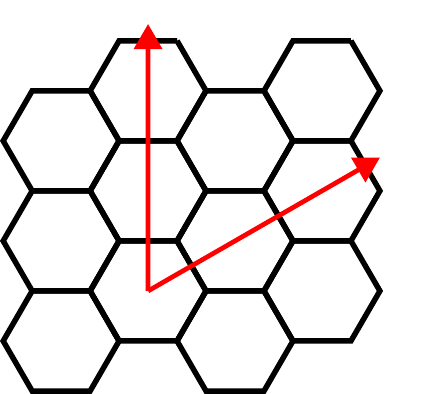
        \subcaption{}\label{fig:hex_layout}
    \end{subfigure}
    \hfill
    \begin{subfigure}{0.15\columnwidth}
        \centering
        \def\svgwidth{0.9\textwidth}
        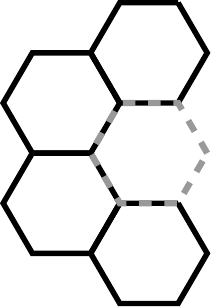
        \subcaption{}\label{fig:unreachable}
    \end{subfigure}
    \hfill
    \begin{subfigure}{0.45\columnwidth}
        \centering
        \def\svgwidth{0.6\textwidth}
        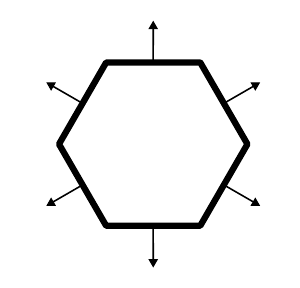
        \subcaption{}\label{fig:single_hex}
    \end{subfigure}
    \caption{(a) Axial coordinate system for the 2-D hexagonal space $\mathcal{H}$ that Huddle uses, where $p$ and $q$ are the column and row coordinates, respectively. (b) An unreachable position shown by the dashed hexagon. (c) Layout of wall indices and their corresponding directional unit vectors $\hat{u}_w$.}
\end{figure}

Formally, we represent a robot assembly at time $t$ using an undirected, planar graph, $G(t) = (V(t),E(t))$. Each in-assembly robot $i$ is represented by vertex $v_i \in V(t)$ and its attachment to neighbor $j$ is represented by edge $e_{ij} = e_{ji} \in E(t)$. Without loss of generality, we use the axial convention for the 2-D hexagonal coordinate frame $\mathcal{H} = \mathbb{Z}^2$ (\fig{}\ref{fig:hex_layout}). Each in-assembly robot occupies position $x_i \coloneqq (p_i,q_i) \in X(t) \subseteq \mathcal{H}$, where $p_i$ and $q_i$ are the column and row coordinates of robot $i$, respectively, and $X(t)$ is the set of occupied positions in $G(t)$. Only robots that share an edge can communicate.

To build $G(t)$, we assume an input of the target shape, given as a set of hexagonal coordinates ($S \subseteq \mathcal{H}$) that does not contain holes. As such, $S$ can be summarily represented by its perimeter $X_{b}$, a set of boundary coordinates---coordinates of hexagons in $S$ that have at least one unattached side. We assume that each robot has access to this set, which is invariant and can be provided offline or locally propagated from the root robot.

Passing-by robots attach only to an opening in the assembly that is formed by at least one (connectivity) and at most three robots (feasibility). We deem such an opening to be \textit{reachable}. Fig.\,\ref{fig:unreachable} shows an \emph{unreachable} opening.
\begin{definition}[Reachability]\label{def:reachability}
    Let $\mathcal{N}(x)$ be the adjacent coordinates of position $x$ and $\deg(v)$ be the degree of vertex $v$. At $t$, we have $Y(t) = \{y \in S \backslash X(t): \mathcal{N}(y) \cap X(t) \neq \emptyset \}$ as the set of unoccupied positions adjacent to $X(t)$. A position $y \in Y(t)$ is \textit{reachable} if adding a vertex $v_j$ with coordinates $x_j = y$ would yield $1 \leq \deg(v_j)\leq 3$. $G(t)$ \textit{maintains reachability} if all positions in $Y(t)$ are reachable.
\end{definition}

For the target shape to form without holes, the assembly must not contain enclosed empty regions at any time.
\begin{definition}[Hole-Free Assembly]\label{def:hole-free}
    $G(t)$ is \textit{hole-free} if no simple cycle $C$ in $G(t)$ bounds an interior region containing unoccupied positions from $S$.
\end{definition}

Let $t_{f}$ be the time $S$ is completed by $G(t)$. The problem we address in this work is as follows.
\begin{problem}\label{prob:problem_statement}
Given a hole-free shape $S$ represented by perimeter coordinates $X_{b}$, determine a decentralized method for signal activation such that:
\begin{enumerate}
    \item $G(t)$ maintains reachability $\forall t < t_f$ (Definition~\ref{def:reachability}), and
    \item $G(t)$ is hole-free $\forall t$ (Definition~\ref{def:hole-free}).
\end{enumerate}
\end{problem}
\section{Methodology}\label{sec:methodology}
We present Huddle, a parallel and decentralized shape assembly algorithm. The assembly grows in a concurrent, staggered manner, prioritizing longitudinal growth (intra-column) without blocking lateral expansion (inter-column). A designated \emph{root} robot initiates the assembly by remaining stationary at a target location. As the first in-assembly robot, it attracts passing-by robots to join by beaming signals; newly joined robots in turn emit their own signals, progressively expanding the assembly until $S$ is achieved. This growth pattern induces a \emph{spanning tree} over the assembly's column segments, where \emph{each segment is a node} and \emph{edges connect segments of adjacent columns}.

Huddle accomplishes this via two phases, \textit{Initialization} and \textit{Signal Activation} (\fig{}\ref{fig:high_level_assembly}). The \textit{Initialization} phase occurs once, at the instant a passing-by robot docks into the assembly. Then, this robot enters the \textit{Signal Activation} phase at each time step to attract other passing-by robots.


\begin{figure*}
    \centering
    \begin{subfigure}{0.193\textwidth}
        \centering
        \def\svgwidth{\textwidth}
\newcommand{\robotcoordfontsize}{6.5pt}
\newcommand{\wallstatusfontsize}{5pt}
\newcommand{\wallstatuslegendfontsize}{5pt}
\begingroup%
\makeatletter%
\providecommand\color[2][]{%
  \errmessage{(Inkscape) Color is used for the text in Inkscape, but the package 'color.sty' is not loaded}%
  \renewcommand\color[2][]{}%
}%
\providecommand\transparent[1]{%
  \errmessage{(Inkscape) Transparency is used (non-zero) for the text in Inkscape, but the package 'transparent.sty' is not loaded}%
  \renewcommand\transparent[1]{}%
}%
\providecommand\rotatebox[2]{#2}%
\newcommand*\fsize{\dimexpr\f@size pt\relax}%
\newcommand*\lineheight[1]{\fontsize{\fsize}{#1\fsize}\selectfont}%
\ifx\svgwidth\undefined%
  \setlength{\unitlength}{479.99998558bp}%
  \ifx\svgscale\undefined%
    \relax%
  \else%
    \setlength{\unitlength}{\unitlength * \real{\svgscale}}%
  \fi%
\else%
  \setlength{\unitlength}{\svgwidth}%
\fi%
\global\let\svgwidth\undefined%
\global\let\svgscale\undefined%
\makeatother%
\begin{picture}(1,0.87306258)%
  \lineheight{1}%
  \setlength\tabcolsep{0pt}%
  \put(0,0){\includegraphics[width=\unitlength,page=1]{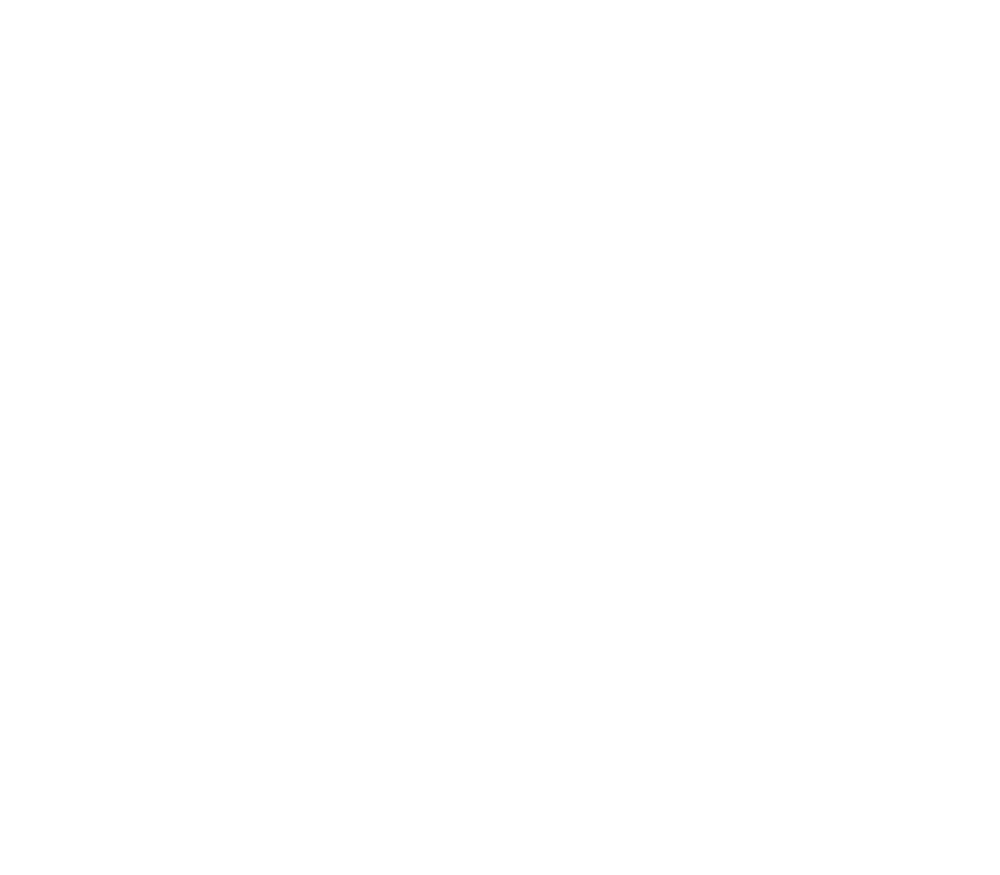}}%
  \put(0,0){\includegraphics[width=\unitlength,page=2]{assembly_1.pdf}}%
  \put(0.50000001,0.15289079){\color[rgb]{0,0,0}\makebox(0,0)[t]{\fontsize{\robotcoordfontsize}{0pt}\selectfont\lineheight{1.25}\smash{\begin{tabular}[t]{c}$(0,0)$\end{tabular}}}}%
\end{picture}%
\endgroup%

        \subcaption{}
    \end{subfigure}
    \hfill
    \begin{subfigure}{0.193\textwidth}
        \centering
        \def\svgwidth{\textwidth}

\begingroup%
\makeatletter%
\providecommand\color[2][]{%
  \errmessage{(Inkscape) Color is used for the text in Inkscape, but the package 'color.sty' is not loaded}%
  \renewcommand\color[2][]{}%
}%
\providecommand\transparent[1]{%
  \errmessage{(Inkscape) Transparency is used (non-zero) for the text in Inkscape, but the package 'transparent.sty' is not loaded}%
  \renewcommand\transparent[1]{}%
}%
\providecommand\rotatebox[2]{#2}%
\newcommand*\fsize{\dimexpr\f@size pt\relax}%
\newcommand*\lineheight[1]{\fontsize{\fsize}{#1\fsize}\selectfont}%
\ifx\svgwidth\undefined%
  \setlength{\unitlength}{482.27333369bp}%
  \ifx\svgscale\undefined%
    \relax%
  \else%
    \setlength{\unitlength}{\unitlength * \real{\svgscale}}%
  \fi%
\else%
  \setlength{\unitlength}{\svgwidth}%
\fi%
\global\let\svgwidth\undefined%
\global\let\svgscale\undefined%
\makeatother%
\begin{picture}(1,0.86894712)%
  \lineheight{1}%
  \setlength\tabcolsep{0pt}%
  \put(0,0){\includegraphics[width=\unitlength,page=1]{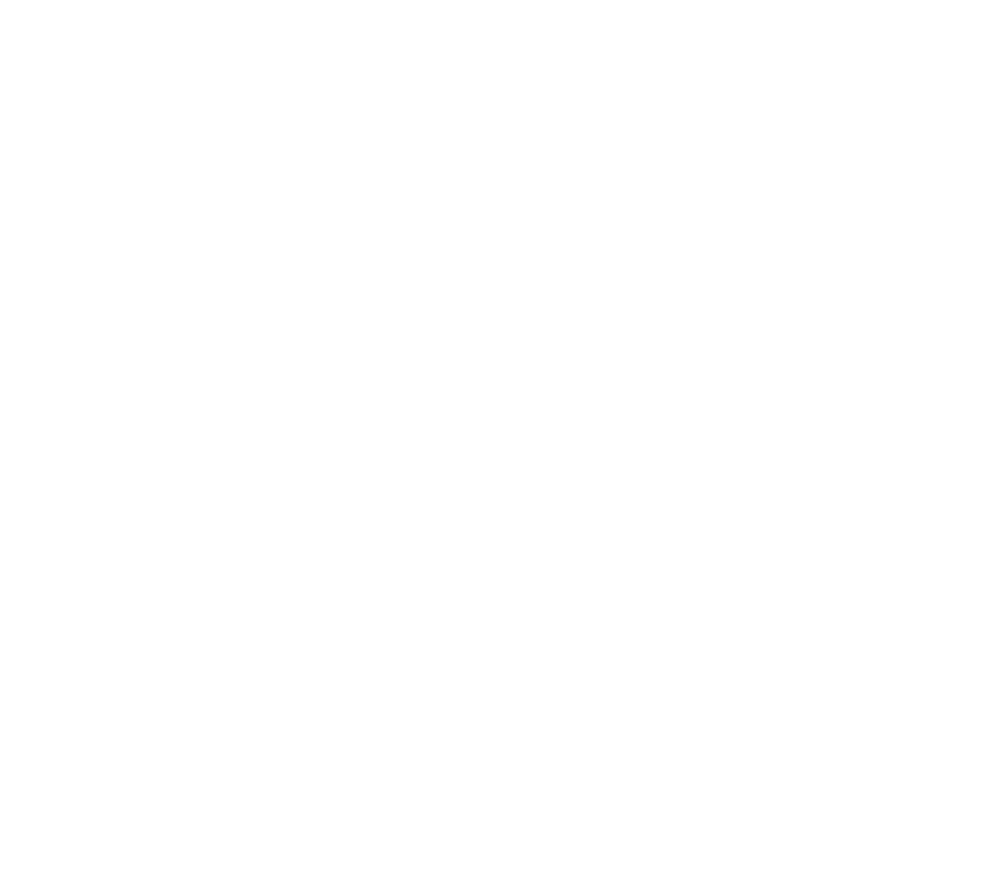}}%
  \put(0,0){\includegraphics[width=\unitlength,page=2]{assembly_2.pdf}}%
  \put(0.50033668,0.15217009){\color[rgb]{0,0,0}\makebox(0,0)[t]{\fontsize{\robotcoordfontsize}{0pt}\selectfont\lineheight{1.25}\smash{\begin{tabular}[t]{c}$(0,0)$\end{tabular}}}}%
\end{picture}%
\endgroup%

        \subcaption{}
    \end{subfigure}
    \hfill
    \begin{subfigure}{0.193\textwidth}
        \centering
        \def\svgwidth{\textwidth}
        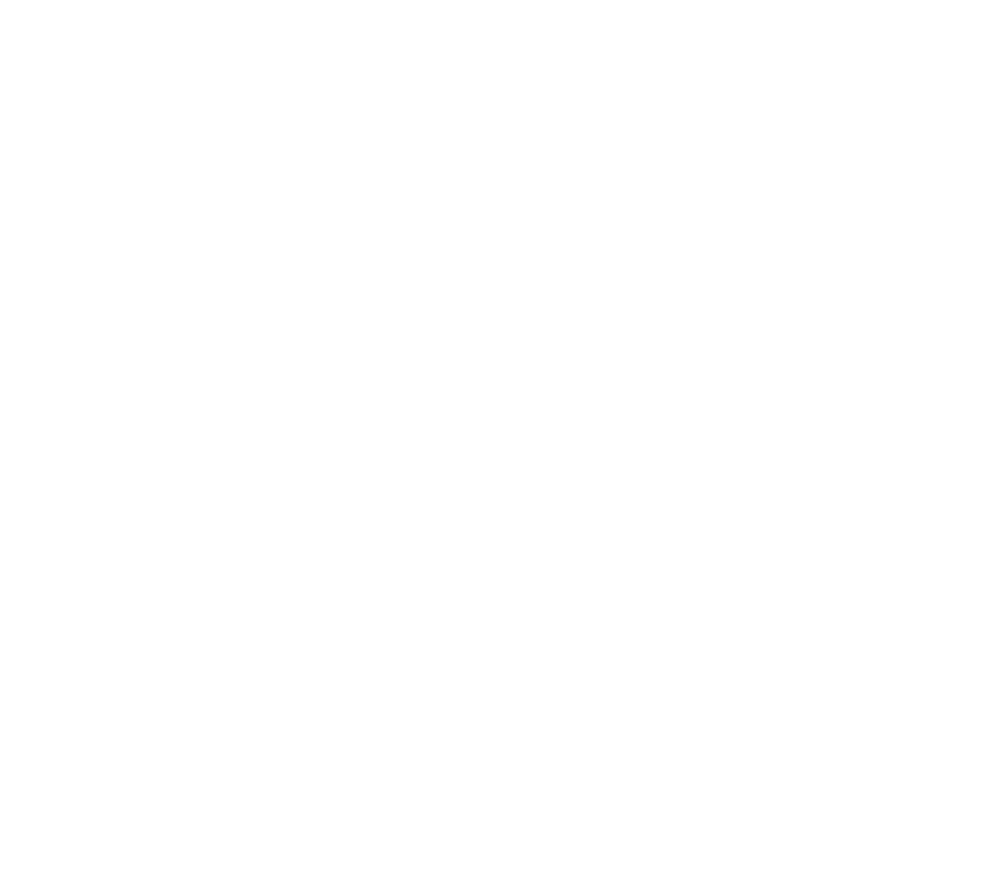
        \subcaption{}\label{fig:high_level_assembly_3}
    \end{subfigure}
    \hfill
    \begin{subfigure}{0.193\textwidth}
        \centering
        \def\svgwidth{\textwidth}
        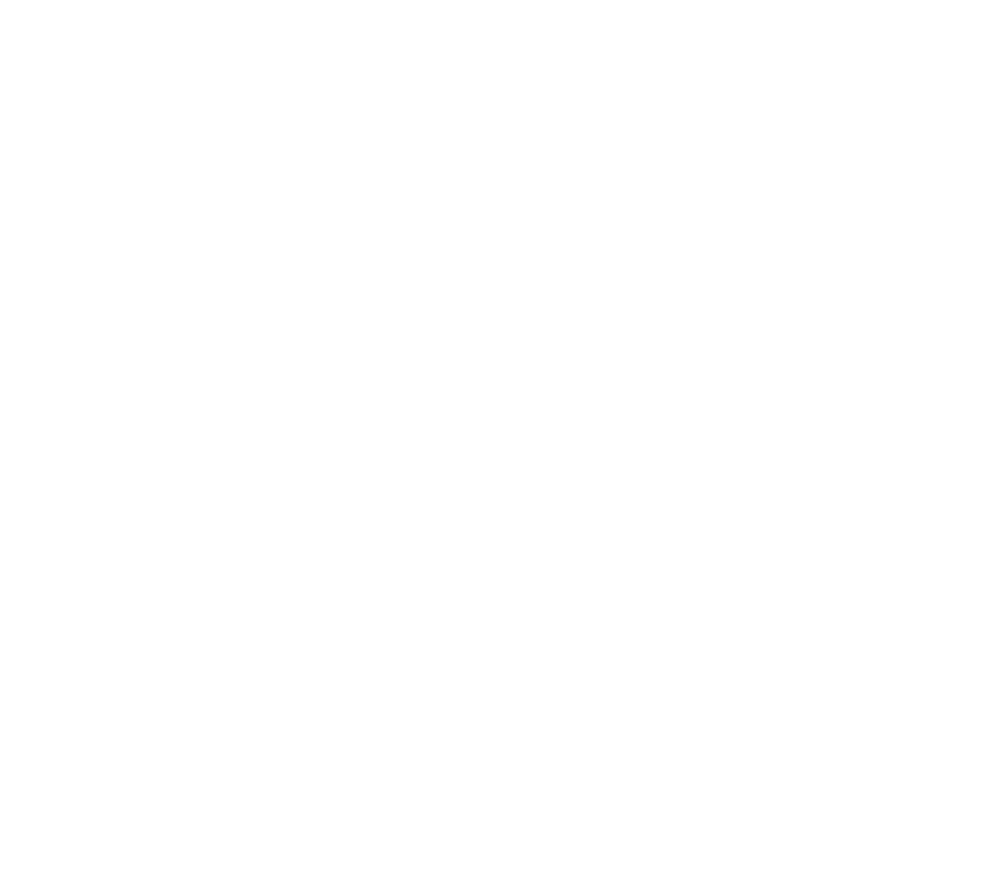
        \subcaption{}
    \end{subfigure}
    \hfill
    \begin{subfigure}{0.193\textwidth}
        \centering
        \def\svgwidth{\textwidth}

\begingroup%
\makeatletter%
\providecommand\color[2][]{%
  \errmessage{(Inkscape) Color is used for the text in Inkscape, but the package 'color.sty' is not loaded}%
  \renewcommand\color[2][]{}%
}%
\providecommand\transparent[1]{%
  \errmessage{(Inkscape) Transparency is used (non-zero) for the text in Inkscape, but the package 'transparent.sty' is not loaded}%
  \renewcommand\transparent[1]{}%
}%
\providecommand\rotatebox[2]{#2}%
\newcommand*\fsize{\dimexpr\f@size pt\relax}%
\newcommand*\lineheight[1]{\fontsize{\fsize}{#1\fsize}\selectfont}%
\ifx\svgwidth\undefined%
  \setlength{\unitlength}{479.99998558bp}%
  \ifx\svgscale\undefined%
    \relax%
  \else%
    \setlength{\unitlength}{\unitlength * \real{\svgscale}}%
  \fi%
\else%
  \setlength{\unitlength}{\svgwidth}%
\fi%
\global\let\svgwidth\undefined%
\global\let\svgscale\undefined%
\makeatother%
\begin{picture}(1,0.87306258)%
  \lineheight{1}%
  \setlength\tabcolsep{0pt}%
  \put(0,0){\includegraphics[width=\unitlength,page=1]{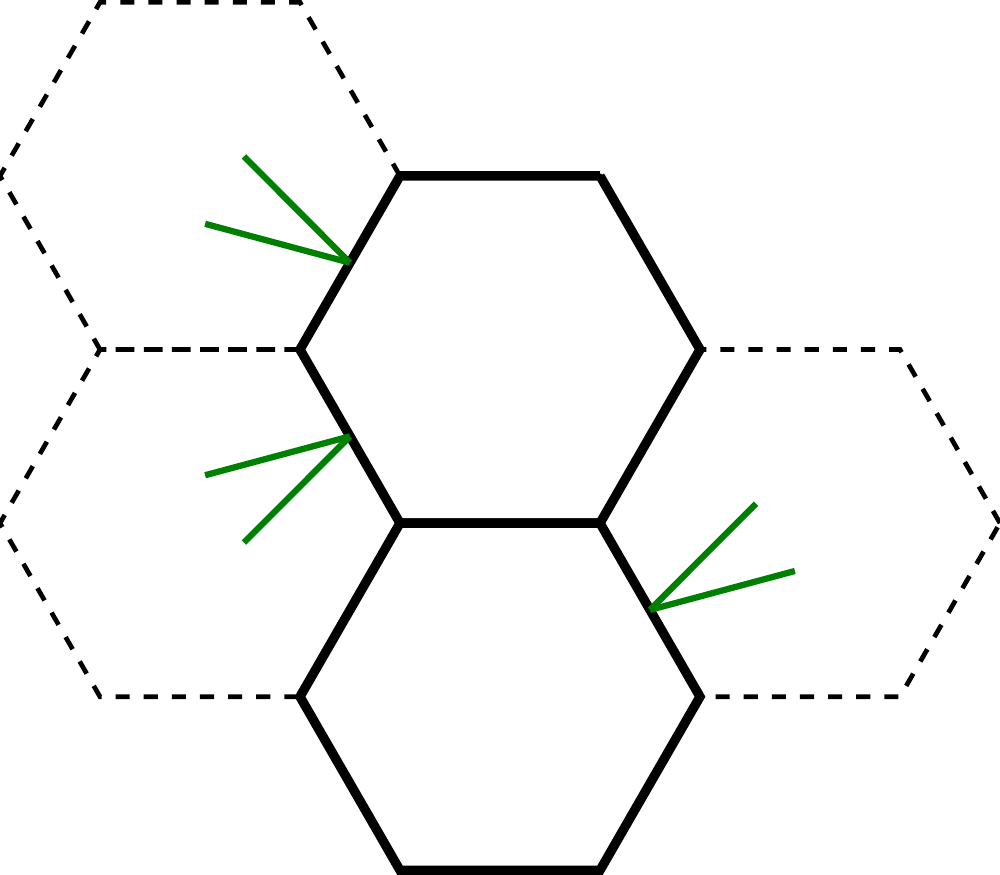}}%
  \put(0,0){\includegraphics[width=\unitlength,page=2]{assembly_5.pdf}}%
  \put(0.50000001,0.15289079){\color[rgb]{0,0,0}\makebox(0,0)[t]{\fontsize{\robotcoordfontsize}{0pt}\selectfont\lineheight{1.25}\smash{\begin{tabular}[t]{c}$(0,0)$\end{tabular}}}}%
  \put(0.50000001,0.49930486){\color[rgb]{0,0,0}\makebox(0,0)[t]{\fontsize{\robotcoordfontsize}{0pt}\selectfont\lineheight{1.25}\smash{\begin{tabular}[t]{c}$(0,1)$\end{tabular}}}}%
\end{picture}%
\endgroup%

        \subcaption{}
    \end{subfigure}
    \caption{Example of a 5-robot assembly. Dashed hexagons are target positions in $S$. (a) A root robot $(0,0)$ in its \textit{Signal Activation} phase, signaling (green) on its \wdf{} wall to attract passing-by robots. (b) A passing-by robot detects and follows the signal to attach to the root robot. Upon connection, the root robot communicates its position $(0,0)$ and the unit vector of the admitting wall $\hat{u}_0$ to the newly attached robot. As a right nucleus---determined prior to its entering the \textit{Signal Activation} phase---the root robot now signals on its \wdfr{} wall. (c) The newly attached robot enters the \textit{Initialization} phase, computing its coordinate $(0,1)$ and identifying its wall statuses. (d) The $(0,1)$ robot determines its role as a left nucleus and its growth direction as $g_i = -1$, as it shares a row with the (rounded down) midpoint $(-1,1)$ of the left column (Algorithms~\ref{alg:determine_role}--\ref{alg:determine_growth_direction}). (e) The $(0,1)$ robot enters the \textit{Signal Activation} phase, signaling its left flank (Algorithms~\ref{alg:identify_nominal_walls} and \ref{alg:delay_signal_activation}). The robots remain in that phase until their connection walls are occupied.} \label{fig:high_level_assembly}

\end{figure*}

\subsection{Preliminaries}\label{sec:methodology_preliminaries}

\subsubsection{Wall type}
A robot wall can be one of two types.
\begin{itemize}
    \item Null wall: A wall on an in-assembly robot \emph{not} intended for attachment.
    \item Connection wall: A wall on an in-assembly robot intended for attachment.
\end{itemize}
Each wall also has a \textit{status}: a null wall is always ``null'', while a connection wall can be ``free'' or ``occupied'' (\fig{}\ref{fig:high_level_assembly_3}).

\subsubsection{Wall direction}
As shown in \fig{}\ref{fig:single_hex}, we label the robot walls in a counter-clockwise manner
where $\hat{u}_w \in \mathcal{H}$ is the unit vector pointing in the direction of wall $w$ from the robot center. When we refer to ``left flank", walls \wdfl{} and \wdrl{} are included; likewise for ``right flank". We use the term ``fore-aft" when referring to the front (\wdf{}) and rear (\wdr{}) walls.

\subsubsection{Robot roles}\label{sec:methodology_preliminaries_roles}
Each robot has one of two fixed roles, determined during the \textit{Initialization} phase.
\begin{itemize}
    \item Nucleus robot: An in-assembly robot that \emph{seeds} the expansion of an adjacent column segment (\textit{i.e.,} spawns a child node in the spanning tree); it signals on at least one flank (one or both walls).
    \item Regular robot: An in-assembly robot that \emph{does not seed} any adjacent column segment; it signals on a flank wall only under a special condition.
\end{itemize}
We describe the special condition in Remark~\ref{rem:special_condition}. Crucially, nucleus robots' ability to seed adjacent columns drives the assembly's \emph{lateral expansion}, as we detail in Section~\ref{sec:methodology_signal}. Both roles can signal on \wdf{} and \wdr{} walls for \emph{longitudinal expansion}.

\subsection{Initialization}\label{sec:methodology_initialization}
When a passing-by robot $i$ attaches to the assembly, it processes information received from its newly connected, in-assembly neighbor $j$. This information consists of robot $j$'s position $x_j$, wall status vector $s_j(t)= (s_{j,0}(t), ..., s_{j,5}(t))$, where $s_{j,w}(t) \in \{\text{null}, \text{free}, \text{occupied}\}$, and the unit vector $\hat{u}_{j,w}$ of robot $j$'s admitting wall $w$. For brevity, we drop the robot index in the unit vector ($\hat{u}_w$) when clear from context, and omit time $t$ in pseudocode.

\subsubsection{Wall Type Identification}




Robot $i$ first determines its position $x_i = x_j + \hat{u}_{j,w}$, then identifies its wall types and statuses to update $s_i(t)$. To do this, robot $i$ finds the set of minimal cycles $X_c$ contained in $X_b$ using Johnson's algorithm~\cite{johnsonFindingAllElementary1975}. It then checks neighboring positions that are either part of $X_b$ or enclosed by a cycle in $X_c$ (using a ray-casting algorithm~\cite{heckbertGraphicsGems2013}). Walls facing these positions are connection walls; otherwise they are null walls.


\subsubsection{Role Determination}
\begin{algorithm}
    \caption{Determine robot role}\label{alg:determine_role}
    \begin{algorithmic}[1]
        \State \textbf{Input:} Self-wall status vector $s_i$, perimeter $X_b$, cycles $X_c$
        \State \textbf{Output:} Nucleation requirement $\alpha_{\text{left}}, \alpha_{\text{right}}$

        \State Initialize $\tilde{\alpha}_{\text{left}} \gets$ undefined, $\tilde{\alpha}_{\text{right}} \gets$ undefined

        \State Evaluate $\tilde{\alpha}_{\text{left}}, \tilde{\alpha}_{\text{right}}$ with Algorithm~\ref{alg:determine_role_using_self}

        \If{$\tilde{\alpha}_{\text{left}}=$ undefined}
        \State Evaluate $\tilde{\alpha}_{\text{left}}$ with Algorithm~\ref{alg:determine_role_using_adjacent_col}
        \EndIf

        \If{$\tilde{\alpha}_{\text{right}}=$ undefined}
        \State Evaluate $\tilde{\alpha}_{\text{right}}$ with Algorithm~\ref{alg:determine_role_using_adjacent_col}
        \EndIf

        \State $\alpha_{\text{left}} \gets \tilde{\alpha}_{\text{left}}$, $\alpha_{\text{right}} \gets \tilde{\alpha}_{\text{right}}$

        \State \Return $\alpha_{\text{left}}, \alpha_{\text{right}}$ \Comment{$\alpha_{\text{left}}, \alpha_{\text{right}} \in \{\text{false}, \text{true}\}$}
    \end{algorithmic}
\end{algorithm}
Next, robot $i$ evaluates its role using Algorithm~\ref{alg:determine_role}, which relies on two subroutines, Algorithm~\ref{alg:determine_role_using_self} and Algorithm~\ref{alg:determine_role_using_adjacent_col}. We infer robot $i$'s role from the output of Algorithm~\ref{alg:determine_role}, which are boolean flags indicating if nucleation is required for a flank.

\begin{algorithm}
    \caption{Determine robot role using self-wall status}\label{alg:determine_role_using_self}
    \begin{algorithmic}[1]
        \State \textbf{Input:} Self-wall status vector $s_i$
        \State \textbf{Output:} Nucleation requirement $\beta_{\text{left}}, \beta_{\text{right}}$

        \State Initialize $\beta_{\text{left}} \gets$ false, $\beta_{\text{right}} \gets$ false

        \For{$F \in \{\text{left}, \text{right}\}$}
        \If{$F$ flank has at least 1 free wall}
        \If{free $F$ flank wall is between 2 null walls}
        \State $\beta_F \gets \text{true}$
        \Else
        \State $\beta_F \gets \text{undefined}$
        \EndIf
        \EndIf
        \EndFor



        \State \Return $\beta_{\text{left}}, \beta_{\text{right}}$ \Comment{$\beta_{\text{left}}, \beta_{\text{right}} \in \{\text{false}, \text{true}, \text{undefined}\}$}
    \end{algorithmic}
\end{algorithm}

The first subroutine, Algorithm~\ref{alg:determine_role_using_self}, considers only robot $i$'s wall status vector $s_i$ in deciding nucleation requirements. It checks for connection flank walls that are situated between two null walls; if such walls exist, robot $i$ nucleates on the flanks with said wall arrangement.

However, $s_i(t)$ alone may be insufficient for identifying nucleation requirements. Thus, a second subroutine, Algorithm~\ref{alg:determine_role_using_adjacent_col}, does so by using adjacent target columns relative to robot $i$'s position $x_i = (p_i, q_i)$ as illustrated in \fig{}\ref{fig:algorithm_4_explanation}:
\begin{align*}\label{eq:target_adjacent_column}
    A_{p_i+a} = \big\{ Q_{p_i+a}^{(1)}, ..., Q_{p_i+a}^{(N_{p_i+a})} \big\}
\end{align*}
where $a \in \{-1, 0, 1\}$ is the column direction (left, current, and right) and $p_i + a$ is the column index. $A_{p_i+a}$ is a target column containing $N_{p_i+a}$ contiguous segments, denoted $Q_{p_i + a}^{(n)}$. When only one segment (out of many) is being discussed, we drop the superscript ${}^{(n)}$ and identify it descriptively.

Algorithm~\ref{alg:determine_role_using_adjacent_col} checks whether $x_i$ is on the same row as midpoint $m$ of a target adjacent segment $Q_{p_i + a}$. If so, robot $i$ is designated as a nucleus. Otherwise, the algorithm verifies whether $m$ has a valid connection in $Q_{p_i}$, and if not, finds the next closest point $m^\star$---making robot $i$ the nucleus if $m^\star$ is on its row. Only one nucleus is responsible for seeding each adjacent segment (proved in Section~\ref{sec:theoretical_results}).


\begin{algorithm}
    \caption{Determine robot role using target adjacent column}\label{alg:determine_role_using_adjacent_col}
    \begin{algorithmic}[1]
        \State \textbf{Input:} Perimeter $X_b$, cycles $X_c$, self-coordinate $x_i$, adjacent direction $a \in \{-1,1\}$
        \State \textbf{Output:} Nucleation requirement $\gamma$

        \State Find target segments in $A_{p_i+a}$ and their midpoints $M$ using $X_b$, $X_c$, and $x_i$
        \If{$A_{p_i+a} = \emptyset$} \Comment{no target segments in $A_{p_i+a}$}
        \State $\gamma \gets \text{false}$
        \Else
        \State Identify target segment $Q_{p_i+a} \in A_{p_i+a}$ and its midpoint $m = (p_i + a, q_m)$
        \If{$q_i = q_m$} \Comment{$m$ is on the same row as $x_i$}
        \State $\gamma \gets \text{true}$
        \Else \Comment{$m$ is not on the same row as $x_i$}

        \State Find self-target segment $Q_{p_i}$ \Comment{target segment containing $x_i$}
        \If{$(p_i, q_m) \in Q_{p_i}$} \Comment{$m$ is on the same row as another point in $Q_{p_i}$}\label{alg:determine_role_using_adjacent_col_case_1_if}
        \State $\gamma \gets \text{false}$\label{alg:determine_role_using_adjacent_col_case_1_endif}
        \Else \Comment{$m$ does not have a valid attachment point in $Q_{p_i}$}

        \State Find $m^\star = (p_i+a,q^{\star}) \in Q_{p_i+a}$ nearest to $m$ that connects to $Q_{p_i}$
        \If{$ q_i = q^{\star}$} \Comment{$m^\star$ is on the same row as $x_i$}
        \State $\gamma \gets \text{true}$
        \Else \Comment{$m^\star$ is not on the same row as $x_i$}\label{alg:determine_role_using_adjacent_col_case_2b_else}
        \State $\gamma \gets \text{false}$\label{alg:determine_role_using_adjacent_col_case_2b_endelse}
        \EndIf
        \EndIf
        \EndIf
        \EndIf
        \State \Return $\gamma$ \Comment{$\gamma \in \{\text{false}, \text{true}\}$}
    \end{algorithmic}
\end{algorithm}

\subsubsection{Growth Direction Determination}
At this point, robot $i$ is either a nucleus robot (nucleating on the left-, right-, or both flanks) or a regular robot (non-nucleating). Subsequently, robot $i$ determines its growth direction, $g_i$ using Algorithm~\ref{alg:determine_growth_direction}. For a nucleus robot, $g_i$ indicates where it will initiate lateral expansion; for a regular robot, $g_i$ indicates the expansion direction it supports.

\begin{algorithm}
    \caption{Determine growth direction}\label{alg:determine_growth_direction}
    \begin{algorithmic}[1]
        \State \textbf{Input:} Nucleation requirement $\alpha_{\text{left}},\alpha_{\text{right}}$, self-wall status vector $s_i$
        \State \textbf{Output:} Growth direction, $g_i$

        \If{$\alpha_{\text{left}}$ \textbf{and} $\alpha_{\text{right}}$ = true}
        \State $g_i \gets 0$ \Comment{neutral (nucleus)}
        \ElsIf{($s_{i,1}$ \textbf{or} $s_{i,2}=$ occupied) \textbf{or} $\alpha_{\text{right}}=$ true}
        \State $g_i \gets 1$ \Comment{right}
        \ElsIf{($s_{i,4}$ \textbf{or} $s_{i,5}=$ occupied) \textbf{or} $\alpha_{\text{left}}=$ true}
        \State $g_i \gets -1$ \Comment{left}
        \Else
        \State $g_i \gets 0$ \Comment{neutral (regular)}
        \EndIf
        \State \Return $g_i$ \Comment{$g_i \in \{-1, 0, 1\}$}
    \end{algorithmic}
\end{algorithm}

By the end of the \textit{Initialization} phase, robot $i$ has acquired information about its connection walls, role, and growth direction. Note that, as a boundary condition for our approach, the root robot assumes the origin $x_i = (0,0)$ and skips the step where it receives information from a neighbor.

\subsection{Signal Activation}\label{sec:methodology_signal}
In this phase, robot $i$ determines which walls require signal activation. We assume the ``signal'' is a transmission of an IR beam. It need not contain any encoded information; we only require that a recipient robot can identify and follow it to the source. This phase runs iteratively until the robot's connection walls are occupied.\footnote{We do not consider robot rejoining here; if such a capability is desired, the \textit{Signal Activation} phase can operate perpetually with departure-handling mechanisms.} At the start of this phase, robot $i$ updates and communicates its wall status vector $s_i(t)$ with its neighbors.

\subsubsection{Signaling Wall Identification}
Then, using intrinsic information ($s_{i}(t)$, $\alpha_{\text{left}}$, $\alpha_{\text{right}}$), robot $i$ identifies nominal walls $W_i$ that require signal activation with Algorithm~\ref{alg:identify_nominal_walls}. These are connection walls found in the prior phase, with priority given to fore-aft walls. This ensures that each column is allowed to grow longitudinally (in the fore-aft directions) before nucleation happens laterally (in the flank directions), as premature lateral expansion could induce unreachable openings. Effectively, Algorithm~\ref{alg:identify_nominal_walls} ensures the occupancy of fore-aft walls first---if they are connection walls---before flank walls can signal for both robot roles, prioritizing each spanning tree node's internal growth before it branches.

\begin{algorithm}
    \caption{Identify walls to activate signals}\label{alg:identify_nominal_walls}
    \begin{algorithmic}[1]
        \State \textbf{Input:} Self-wall status vector $s_i$, nucleation requirement $\alpha_{\text{left}}$, $\alpha_{\text{right}}$
        \State \textbf{Output:} Set of walls $W_i$ requiring signals

        \State Initialize $W_i \gets \emptyset$

        \If{all $s_i =$ occupied} \Comment{no signaling needed}
        \State \Return $W_i$
        \EndIf

        \If{$s_{i,0}$ \textbf{or} $s_{i,3} =$ free} \Comment{prioritize fore-aft walls}\label{alg:identify_nominal_walls_case1_if}
        \For{fore-aft wall $w \in \{0,3\}$}
        \If{$s_{i,w} =$ free}
        \State $W_i \gets W_i \cup \{w\}$
        \EndIf
        \EndFor

        \State \Return $W_i$ \Comment{ignore flank walls because of free fore-aft walls}
        \EndIf\label{alg:identify_nominal_walls_case1_endif}

        \For{flank wall $w \in \{1, 2, 4, 5\}$ with $s_{i,w} = \text{free}$}\label{alg:identify_nominal_walls_case2_for} \Comment{special flank signal condition}
        \If{$w$ is between (2 occupied \textbf{or} (1 null fore-aft \textbf{and} 1 occupied)) walls}
        \State $W_i \gets W_i \cup \{w\}$
        \EndIf
        \EndFor\label{alg:identify_nominal_walls_case2_endfor}
        \If{$\alpha_{\text{left}} =$ true} \Comment{left nucleus flank walls}
        \State $W_i \gets W_i \cup \{\text{free left flank wall(s)}\}$
        \EndIf

        \If{$\alpha_{\text{right}} =$ true} \Comment{right nucleus flank walls}
        \State $W_i \gets W_i \cup \{\text{free right flank wall(s)}\}$
        \EndIf

        \State \Return $W_i$
    \end{algorithmic}
\end{algorithm}

\begin{remark}\label{rem:special_condition}
    The special condition mentioned in Section~\ref{sec:methodology_preliminaries_roles} refers to the logic in lines \ref{alg:identify_nominal_walls_case2_for}--\ref{alg:identify_nominal_walls_case2_endfor} of Algorithm~\ref{alg:identify_nominal_walls}. Added to improve passing-by robots' signal detection probability, it can be removed without affecting the assembly process.
\end{remark}

\begin{figure}
    \centering
    \begin{subfigure}{0.6\columnwidth}
        \centering
        \hspace{-30pt}
        \begin{subfigure}{\columnwidth}
            \centering
            \def\svgwidth{0.3\textwidth}
\newcommand{\labelfontsize}{8pt}
\newcommand{\labelhspace}{12pt}
\begingroup%
  \makeatletter%
  \providecommand\color[2][]{%
    \errmessage{(Inkscape) Color is used for the text in Inkscape, but the package 'color.sty' is not loaded}%
    \renewcommand\color[2][]{}%
  }%
  \providecommand\transparent[1]{%
    \errmessage{(Inkscape) Transparency is used (non-zero) for the text in Inkscape, but the package 'transparent.sty' is not loaded}%
    \renewcommand\transparent[1]{}%
  }%
  \providecommand\rotatebox[2]{#2}%
  \newcommand*\fsize{\dimexpr\f@size pt\relax}%
  \newcommand*\lineheight[1]{\fontsize{\fsize}{#1\fsize}\selectfont}%
  \ifx\svgwidth\undefined%
    \setlength{\unitlength}{366.08396035bp}%
    \ifx\svgscale\undefined%
      \relax%
    \else%
      \setlength{\unitlength}{\unitlength * \real{\svgscale}}%
    \fi%
  \else%
    \setlength{\unitlength}{\svgwidth}%
  \fi%
  \global\let\svgwidth\undefined%
  \global\let\svgscale\undefined%
  \makeatother%
  \begin{picture}(1,0.19053005)%
    \lineheight{1}%
    \setlength\tabcolsep{0pt}%
    \put(0,0){\includegraphics[width=\unitlength,page=1]{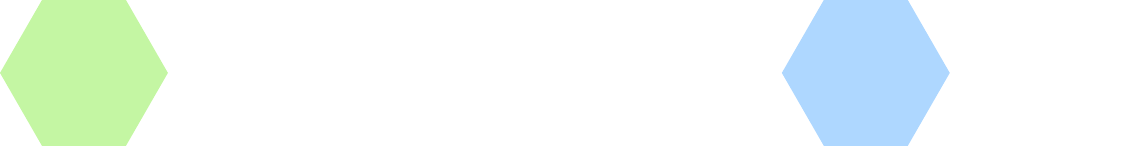}}%
    \put(0.25623849,0.05039795){\color[rgb]{0,0,0}\makebox(0,0)[t]{\fontsize{\labelfontsize}{0pt}\selectfont\lineheight{1.25}\smash{\begin{tabular}[t]{c}\hspace{\labelhspace}$\in Q^{(1)}_{-1}$\end{tabular}}}}%
    \put(0.95250885,0.05039795){\color[rgb]{0,0,0}\makebox(0,0)[t]{\fontsize{\labelfontsize}{0pt}\selectfont\lineheight{1.25}\smash{\begin{tabular}[t]{c}\hspace{\labelhspace}$\in Q^{(2)}_{-1}$\end{tabular}}}}%
  \end{picture}%
\endgroup%

        \end{subfigure}
        \par \smallskip
        \centering
        \begin{subfigure}{0.32\columnwidth}
            \centering
            \def\svgwidth{0.75\textwidth}
            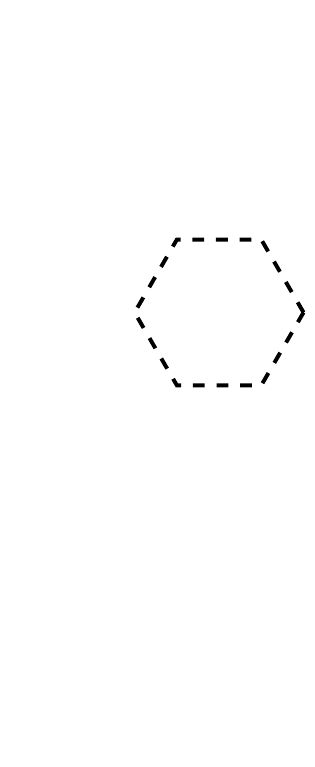
            \begin{overpic}[width=0.01\textwidth]{figures/blank.png}
                \put(0,0){\hspace{-68pt}\raisebox{110pt}{\small(i)}}  
            \end{overpic}
        \end{subfigure}
        \hfill
        \begin{subfigure}{0.32\columnwidth}
            \centering
            \def\svgwidth{0.75\textwidth}
            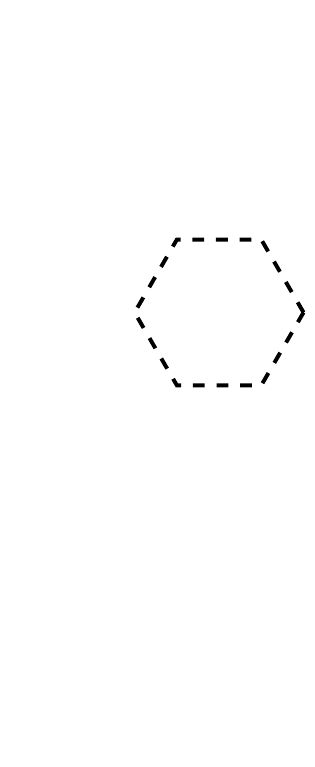
            \begin{overpic}[width=0.01\textwidth]{figures/blank.png}
                \put(0,0){\hspace{-68pt}\raisebox{110pt}{\small(ii)}}  
            \end{overpic}
        \end{subfigure}
        \hfill
        \begin{subfigure}{0.32\columnwidth}
            \centering
            \def\svgwidth{0.75\textwidth}
\newcommand{\hexcoordfontsize}{6pt}
\newcommand{\hexcoordvarfontsize}{10pt}
\begingroup%
  \makeatletter%
  \providecommand\color[2][]{%
    \errmessage{(Inkscape) Color is used for the text in Inkscape, but the package 'color.sty' is not loaded}%
    \renewcommand\color[2][]{}%
  }%
  \providecommand\transparent[1]{%
    \errmessage{(Inkscape) Transparency is used (non-zero) for the text in Inkscape, but the package 'transparent.sty' is not loaded}%
    \renewcommand\transparent[1]{}%
  }%
  \providecommand\rotatebox[2]{#2}%
  \newcommand*\fsize{\dimexpr\f@size pt\relax}%
  \newcommand*\lineheight[1]{\fontsize{\fsize}{#1\fsize}\selectfont}%
  \ifx\svgwidth\undefined%
    \setlength{\unitlength}{150.0000036bp}%
    \ifx\svgscale\undefined%
      \relax%
    \else%
      \setlength{\unitlength}{\unitlength * \real{\svgscale}}%
    \fi%
  \else%
    \setlength{\unitlength}{\svgwidth}%
  \fi%
  \global\let\svgwidth\undefined%
  \global\let\svgscale\undefined%
  \makeatother%
  \begin{picture}(1,2.44749952)%
    \lineheight{1}%
    \setlength\tabcolsep{0pt}%
    \put(0,0){\includegraphics[width=\unitlength,page=1]{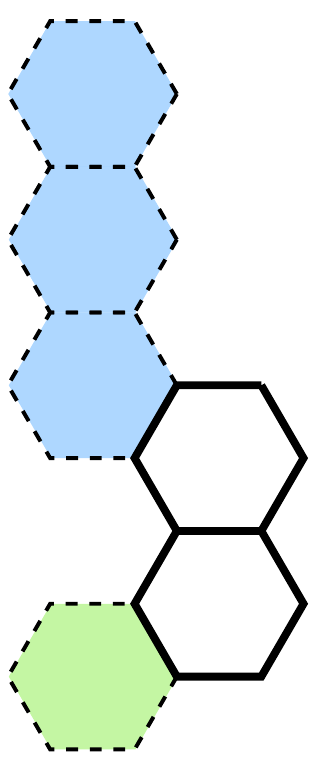}}%
    \put(0.69640772,0.46999976){\color[rgb]{0,0,0}\makebox(0,0)[t]{\fontsize{\hexcoordfontsize}{0pt}\selectfont\lineheight{1.25}\smash{\begin{tabular}[t]{c}$(0,0)$\end{tabular}}}}%
    \put(0.2964142,0.25749967){\color[rgb]{0,0,0}\makebox(0,0)[t]{\fontsize{\hexcoordfontsize}{0pt}\selectfont\lineheight{1.25}\smash{\begin{tabular}[t]{c}$(-1,0)$\end{tabular}}}}%
    \put(0.2964142,1.65249975){\color[rgb]{0,0,0}\makebox(0,0)[t]{\fontsize{\hexcoordfontsize}{0pt}\selectfont\lineheight{1.25}\smash{\begin{tabular}[t]{c}$(-1,3)$\end{tabular}}}}%
    \put(0.2964142,2.11749968){\color[rgb]{0,0,0}\makebox(0,0)[t]{\fontsize{\hexcoordfontsize}{0pt}\selectfont\lineheight{1.25}\smash{\begin{tabular}[t]{c}$(-1,4)$\end{tabular}}}}%
    \put(0.2964142,1.18749982){\color[rgb]{0,0,0}\makebox(0,0)[t]{\fontsize{\hexcoordfontsize}{0pt}\selectfont\lineheight{1.25}\smash{\begin{tabular}[t]{c}$(-1,2)$\end{tabular}}}}%
    \put(0.69628751,0.93499965){\color[rgb]{0,0,0}\makebox(0,0)[t]{\fontsize{\hexcoordvarfontsize}{0pt}\selectfont\lineheight{1.25}\smash{\begin{tabular}[t]{c}$i$\end{tabular}}}}%
  \end{picture}%
\endgroup%

            \begin{overpic}[width=0.01\textwidth]{figures/blank.png}
                \put(0,0){\hspace{-68pt}\raisebox{110pt}{\small(iii)}}  
            \end{overpic}
        \end{subfigure}
        \subcaption{}\label{fig:algorithm_4_explanation}
    \end{subfigure}
    \hfill
    \begin{subfigure}{0.39\columnwidth}
        \centering
        \def\svgwidth{0.5\textwidth}
\newcommand{\robotidfontsize}{10pt}
\newcommand{\robotidvspace}{-1pt}
\begingroup%
  \makeatletter%
  \providecommand\color[2][]{%
    \errmessage{(Inkscape) Color is used for the text in Inkscape, but the package 'color.sty' is not loaded}%
    \renewcommand\color[2][]{}%
  }%
  \providecommand\transparent[1]{%
    \errmessage{(Inkscape) Transparency is used (non-zero) for the text in Inkscape, but the package 'transparent.sty' is not loaded}%
    \renewcommand\transparent[1]{}%
  }%
  \providecommand\rotatebox[2]{#2}%
  \newcommand*\fsize{\dimexpr\f@size pt\relax}%
  \newcommand*\lineheight[1]{\fontsize{\fsize}{#1\fsize}\selectfont}%
  \ifx\svgwidth\undefined%
    \setlength{\unitlength}{273.66612472bp}%
    \ifx\svgscale\undefined%
      \relax%
    \else%
      \setlength{\unitlength}{\unitlength * \real{\svgscale}}%
    \fi%
  \else%
    \setlength{\unitlength}{\svgwidth}%
  \fi%
  \global\let\svgwidth\undefined%
  \global\let\svgscale\undefined%
  \makeatother%
  \begin{picture}(1,1.37576413)%
    \lineheight{1}%
    \setlength\tabcolsep{0pt}%
    \put(0,0){\includegraphics[width=\unitlength,page=1]{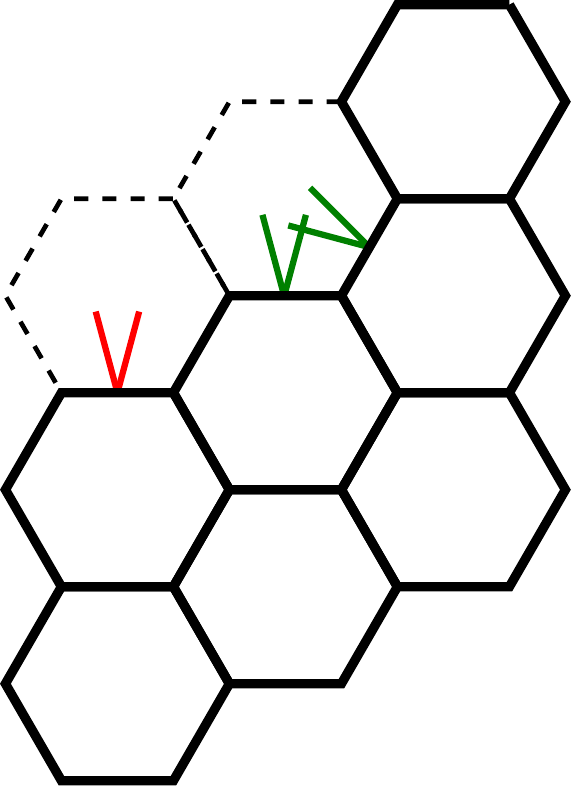}}%
    \put(0.50498362,0.67136562){\raisebox{\robotidvspace}{\color[rgb]{0,0,0}\makebox(0,0)[t]{\fontsize{\robotidfontsize}{0pt}\selectfont\smash{\begin{tabular}[t]{c}$j$\end{tabular}}}}}%
    \put(0.17563149,0.48095967){\raisebox{\robotidvspace}{\color[rgb]{0,0,0}\makebox(0,0)[lt]{\fontsize{\robotidfontsize}{0pt}\selectfont\smash{\begin{tabular}[t]{l}$i$\end{tabular}}}}}%
  \end{picture}%
\endgroup%

        \subcaption{}\label{fig:delay}
    \end{subfigure}
    \caption{(a) Notable situations during robot $i$'s determination of its role, where it lies on a different row from the midpoint $m = (-1, 3)$ of $Q^{(2)}_{-1}$. Dashed hexagons are target positions for $S$, with color overlays indicating membership to target segments. (i) Located on the midpoint's row, a robot occupying $(0,3)$---in the future---will be the nucleus (Algorithm~\ref{alg:determine_role_using_adjacent_col}, lines \ref{alg:determine_role_using_adjacent_col_case_1_if}--\ref{alg:determine_role_using_adjacent_col_case_1_endif}). (ii) Without a target $(0,3)$ position, the new attachment point is $m^\star = (-1,2)$, thus $(0,2)$ is the nucleus position (Algorithm~\ref{alg:determine_role_using_adjacent_col}, lines \ref{alg:determine_role_using_adjacent_col_case_2b_else}--\ref{alg:determine_role_using_adjacent_col_case_2b_endelse}). (iii) With its \wdfl{} wall between two null walls, robot $i$ is solely responsible for nucleating to the left (Algorithm~\ref{alg:determine_role_using_self}). (b) Suppose $g_i = g_j = -1$ and robot $i$ does not delay its signal (red) while robot $j$'s front wall is free. An unreachable position will occur if robot $i$'s front wall is occupied \emph{before} robot $j$'s.}
\end{figure}

\subsubsection{Delayed Wall Identification}
Next, using extrinsic information (neighbor $j$'s wall status vector $s_j(t)$), robot $i$ uses Algorithm~\ref{alg:delay_signal_activation} to delay certain fore-aft signals when a neighbor $j$'s fore-aft walls are still free. This prevents robot $i$ from attracting attachments before $j$, which could create an unreachable position (\fig{}\ref{fig:delay}). As a result of Algorithm~\ref{alg:delay_signal_activation}, robot $i$ obtains $\tilde{W}_i$, the set of walls to delay signal activation.


By the end of the \textit{Signal Activation} phase, robot $i$ activates signals on the walls in $W_i \backslash \tilde{W}_i$ for the current time step. As long as any of its walls remain free, robot $i$ stays in the \textit{Signal Activation} phase and repeats Algorithms~\ref{alg:identify_nominal_walls} and \ref{alg:delay_signal_activation}.

\begin{algorithm}
    \caption{Identify walls to delay signal activation}\label{alg:delay_signal_activation}
    \begin{algorithmic}[1]
        \State \textbf{Input:} Self-wall status vector $s_i$, set of $i$'s neighbors' wall status vectors (indexed by $i$'s walls) $\{s_{(w)} : w \in \{0, ..., 5 \} \}$, growth direction $g_i$
        \State \textbf{Output:} Set of walls requiring delayed activation $\tilde{W}_i$

        \State Initialize $\tilde{W}_i \gets \emptyset$

        \If{$g_i = -1$}
        \If{$s_{i,5} =$ occupied \textbf{and} $s_{(5),0} =$ free} \Comment{\wdfr{} neighbor has free \wdf{} wall}
        \State $\tilde{W}_i \gets \tilde{W}_i \cup \{0\}$ \Comment{delay \wdf{} wall}
        \EndIf

        \If{$s_{i,4} =$ occupied \textbf{and} $s_{(4),3} =$ free} \Comment{\wdrr{} neighbor has free \wdr{} wall}
        \State $\tilde{W}_i \gets \tilde{W}_i \cup \{3\}$ \Comment{delay \wdr{} wall}
        \EndIf

        \ElsIf{$g_i = 1$}
        \If{$s_{i,1} =$ occupied \textbf{and} $s_{(1),0} =$ free} \Comment{\wdfl{} neighbor has free \wdf{} wall}
        \State $\tilde{W}_i \gets \tilde{W}_i \cup \{0\}$ \Comment{delay \wdf{} wall}
        \EndIf

        \If{$s_{i,2} =$ occupied \textbf{and} $s_{(2),3} =$ free} \Comment{\wdrl{} neighbor has free \wdr{} wall}
        \State $\tilde{W}_i \gets \tilde{W}_i \cup \{3\}$ \Comment{delay \wdr{} wall}
        \EndIf
        \EndIf
        \State \Return $\tilde{W}_i$
    \end{algorithmic}
\end{algorithm}

\section{Theoretical Results}\label{sec:theoretical_results}
Here, we show that Huddle solves Problem~\ref{prob:problem_statement} by proving two properties separately: that $G(t)$ is reachable and always hole-free. Due to space constraints, we state the following lemmas without proof.

\begin{lemma}[Unique nucleation]\label{lem:unique_nucleation}
    Algorithms~\ref{alg:determine_role_using_self} and \ref{alg:determine_role_using_adjacent_col} ensure that any contiguous column segment $Q_{k+a}$ is connected to only one nucleus robot from an adjacent column $k$ (\textit{i.e.,} each node in the spanning tree has a unique parent).
\end{lemma}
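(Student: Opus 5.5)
Fix a column $k$, a contiguous target segment $Q_k$ of $A_k$, a direction $a\in\{-1,1\}$, and a target segment $Q_{k+a}$ of the adjacent column. Say $Q_{k+a}$ is \emph{seeded from} $Q_k$ if some position of $Q_k$ is laterally adjacent to some position of $Q_{k+a}$. I will show that among the robots occupying $Q_k$, exactly one has $\alpha_F=$ true for the flank $F$ facing $Q_{k+a}$, with the nucleation directed at $Q_{k+a}$. The argument is per column segment because the role computation is deterministic in $(x_i, X_b, X_c)$. That computation does not depend on assembly order, so "which position in $Q_k$ becomes the nucleus for $Q_{k+a}$" is a fixed function of $S$. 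Since every robot computes the same segments and the same midpoint $m$ (the rounded-down midpoint), the task reduces to a combinatorial statement about $S$.

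\textbf{Step 1: the self-wall rule.} If Algorithm~\ref{alg:determine_role_using_self} returns true for flank $F$ at $x_i$, the free flank wall $w$ lies between two null walls. I will show this means $x_i$ is the only position of $Q_k$ adjacent to $Q_{k+a}$. In axial coordinates each flank has two walls, and the neighbours of $x_i$ in column $k+a$ are two consecutive rows. The null fore-aft wall on the relevant side says that $x_i\pm\hat u_0$ is not in $S$, so $x_i$ is the end of $Q_k$ on that side. The other null wall is the companion flank wall, so the adjacent column touches $x_i$ only through $w$. A hex-geometry check then shows the target neighbour behind $w$ belongs to a segment of column $k+a$ whose only contact with $Q_k$ is through $x_i$. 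Uniqueness is immediate, and no other robot in $Q_k$ can claim this segment through Algorithm~\ref{alg:determine_role_using_adjacent_col}. The reason is that, for those robots, Algorithm~\ref{alg:determine_role} consults Algorithm~\ref{alg:determine_role_using_adjacent_col} only when their own flank is undefined. In that case the nearest connecting point $m^\star$ is forced to be the one behind $w$, whose row is adjacent only to $x_i$.

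\textbf{Step 2: the adjacent-column rule.} Otherwise the nucleus is chosen by Algorithm~\ref{alg:determine_role_using_adjacent_col}, and I split into its two cases. In case (i), $(k,q_m)\in Q_k$: exactly one robot of $Q_k$ has row $q_m$, so exactly one robot sets $\gamma=$ true, and every other robot either fails $q_i=q_m$ or reaches line~\ref{alg:determine_role_using_adjacent_col_case_1_if} and gets false. In case (ii), $(k,q_m)\notin Q_k$: all robots of $Q_k$ compute the same $m^\star$, namely the nearest point of $Q_{k+a}$ connecting to $Q_k$. Here I need a deterministic tie-break, for instance the rounded-down choice consistent with the midpoint convention. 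Given that, exactly one row $q^\star$ results, and so exactly one robot has $q_i=q^\star$. Existence of a connecting point follows from the assumption that $Q_{k+a}$ is seeded from $Q_k$.

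\textbf{Step 3: the main obstacle.} The hard part is identifying segments correctly. A column $k+a$ segment may be adjacent to several segments of column $k$, and a robot must not nucleate into a segment already seeded from elsewhere; otherwise the spanning tree would get two parents. My plan is to fix a canonical parent for each segment, namely the segment toward the root through which the tree reaches it. I would then show that "identify target segment $Q_{p_i+a}$" in Algorithm~\ref{alg:determine_role_using_adjacent_col} returns a segment only when $Q_k$ is that canonical parent, presumably using the cycles $X_c$ and the fact that $S$ is hole-free. Hole-freeness is the key ingredient: if two distinct segments of column $k$ both touched $Q_{k+a}$, they would enclose a region, together with $Q_{k+a}$ and the columns between them. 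If that region contains non-$S$ cells, it is a hole of $S$. I expect this part to be the delicate one: the hex-adjacency bookkeeping and the tie-breaking for $m^\star$ need care.
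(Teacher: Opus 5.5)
The paper states this lemma without proof, so there is no argument to compare yours against. Judged on its own, your proposal has a genuine gap. It sits in Step~3, but the obstacle is not the one you describe.

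\textbf{What Steps~1--2 give, and why Step~3 does not close the gap.} Granting your tie-break, Steps~1--2 do show that at most one robot of a fixed segment $Q_k$ nucleates toward a fixed $Q_{k+a}$. The lemma, however, asks for uniqueness over \emph{all} segments touching $Q_{k+a}$, in column $k$ and in column $k+2a$. Step~3 rests on a false claim: hole-freeness does not prevent a segment from touching two segments of the same neighbouring column. In the paper's axial convention, $(k,q)$ has right neighbours $(k+1,q)$ (FR) and $(k+1,q-1)$ (RR). Let column $k$ of $S$ be rows $\{0,1\}\cup\{5,6\}$ and column $k+1$ be rows $\{1,\dots,4\}$. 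Then:
\begin{itemize}
\item the only contacts are $(k,1)$--$(k+1,1)$ and $(k,5)$--$(k+1,4)$;
\item the gap $(k,2),(k,3),(k,4)$ opens into the empty column $k-1$, so the region you describe is never closed;
\item hence $S$ is a hole-free `C'.
\end{itemize}
What hole-freeness can give you is that the segment adjacency graph has no cycles. It says nothing about how many neighbours a segment has. The lemma is exactly the assertion that, of those neighbours, only the one on the root side ever nucleates into it.

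\textbf{The order-independence premise is false.} You assume roles are an order-independent function of $(x_i,X_b,X_c)$. They are not, and if they were, the lemma would be false, so Step~3 cannot be repaired inside your reduction. Algorithm~\ref{alg:determine_role_using_self} distinguishes \emph{free} from \emph{occupied} walls in $s_i$ as recorded when robot $i$ docks. Algorithm~\ref{alg:determine_role} falls back to Algorithm~\ref{alg:determine_role_using_adjacent_col} only when that result is undefined. Now ignore occupancy:
\begin{itemize}
\item In the `C' above, $(k,1)$ passes the self-wall test toward the single segment of column $k+1$ (its FR wall lies between its null F and RR walls). So does $(k,5)$ (its RR wall lies between its null R and FR walls).
\item In the straight run $(k-1,0),(k,0),(k+1,0)$, the FR wall of $(k-1,0)$ and the RL wall of $(k+1,0)$ both pass the test toward $\{(k,0)\}$, from opposite sides. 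Your fixed-$a$ set-up never examines this case.
\end{itemize}
So your per-pair statement, applied to every adjacent pair as your definition of ``seeded from'' does, yields two parents. If the lemma holds, your per-pair count must in fact be zero whenever $Q_k$ is a child of $Q_{k+a}$. No choice of the unspecified ``identify target segment'' step can fix this, since Algorithm~\ref{alg:determine_role_using_adjacent_col} is never called in these cases.

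\textbf{What actually excludes the second parent.} The mechanism is dynamic. In both examples, whichever competing segment is built later is reached \emph{through} the shared neighbour. Its first robot docks through the wall facing that neighbour, so that wall is recorded as occupied. The flank then has no free wall, and Algorithm~\ref{alg:determine_role_using_self} returns false. The missing ingredient is an induction over attachment events showing, roughly, that a robot facing an already-seeded adjacent segment initializes with the facing flank walls occupied or null, and so never becomes a nucleus toward it. Proving that requires the fore-aft priority of Algorithm~\ref{alg:identify_nominal_walls} and the delays of Algorithm~\ref{alg:delay_signal_activation} (Lemma~\ref{lem:controlled_expansion}), neither of which enters your plan.
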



\begin{corollary}\label{cor:segments_never_join}
    By Lemma~\ref{lem:unique_nucleation}, a segment $Q_{k+a}$ expands only from a single origin---a seeding robot $j$, $x_j = (k+a, q_j) \in Q_{k+a}$, attached to a nucleus robot in column $k$. Separate segments of column $k+a$---\textit{i.e.,} each having a seeding robot attached to a different nucleus robot in column $k$---will never expand to join one another.
\end{corollary}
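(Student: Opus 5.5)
We prove Corollary~\ref{cor:segments_never_join} in two parts: first the single-origin claim, then the non-joining claim.

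\textbf{Single origin.} Any robot that comes to occupy a position $y \in Q_{k+a}$ attaches through a signaling wall of an in-assembly neighbor. That neighbor is either in the same column $k+a$, attaching through a fore-aft wall \wdf{} or \wdr{}, or in an adjacent column, attaching through a flank wall. We induct on the order of attachments within $Q_{k+a}$.

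The first robot to enter $Q_{k+a}$ has no same-column neighbor in $Q_{k+a}$. It must therefore have attached through a flank wall of a robot in column $k$ or column $k+2a$.

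\emph{Case 1: the flank neighbor lies in column $k$.} This neighbor is a nucleus for $Q_{k+a}$. By Lemma~\ref{lem:unique_nucleation}, it is the unique nucleus in column $k$ responsible for $Q_{k+a}$, and we call the attached robot the seeding robot $j$.

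\emph{Case 2: the flank neighbor lies in column $k+2a$.} Then $Q_{k+a}$ is the child of a segment in column $k+2a$, and Lemma~\ref{lem:unique_nucleation} again gives a unique parent. The phrase ``attached to a nucleus robot in column $k$'' in the statement should be read with $k$ as the parent column, so this case is covered after relabelling.

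The special flank signals of Remark~\ref{rem:special_condition} need separate treatment. Lines~\ref{alg:identify_nominal_walls_case2_for}--\ref{alg:identify_nominal_walls_case2_endfor} of Algorithm~\ref{alg:identify_nominal_walls} activate a flank wall only when it lies between two occupied walls, or between one occupied wall and one null fore-aft wall. The occupied neighbor shares a column with the target position, or is adjacent to it along the column. Hence such a signal fills a position adjacent to an already-occupied robot of $Q_{k+a}$ and cannot start a new origin. This step is the main obstacle: it requires a careful case check of the hexagonal adjacency. On the hexagonal grid, the \wdfl{} and \wdrl{} walls of a robot at $(p,q)$ face the two cells of column $p-1$ that are adjacent to each other vertically, and the one-occupied-neighbor condition forces one of those cells to already be occupied.

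For every later attachment in $Q_{k+a}$, the fore-aft rule in lines~\ref{alg:identify_nominal_walls_case1_if}--\ref{alg:identify_nominal_walls_case1_endif} extends the segment contiguously from occupied cells. Combined with the previous paragraph, this shows that the occupied part of $Q_{k+a}$ is always a contiguous interval containing $x_j$.

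\textbf{Non-joining.} Let $Q$ and $Q'$ be distinct target segments of column $k+a$. By the definition of $A_{k+a}$, they are maximal contiguous runs of $S$ in that column, so they are separated by at least one row $r$ with $(k+a,r)\notin S$. Every robot positions itself via $x_i = x_j + \hat{u}_{j,w}$ and only signals through connection walls. By the wall-type identification step, those walls face positions in $X_b$ or enclosed by a cycle in $X_c$, that is, positions of $S$. So no robot ever occupies $(k+a,r)$.

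The occupied part of $Q$ is a contiguous interval within $Q$, and likewise for $Q'$. Both therefore stay on their own side of row $r$, and the two intervals never become vertically adjacent. Consequently, no fore-aft edge of $G(t)$ ever joins them. The nuclei for $Q$ and $Q'$ are different robots, and by Lemma~\ref{lem:unique_nucleation} neither nucleus seeds the other segment, so the spanning-tree nodes for $Q$ and $Q'$ remain distinct.
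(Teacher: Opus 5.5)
\paragraph{Comparison with the paper.} The paper gives no argument for this corollary beyond citing Lemma~\ref{lem:unique_nucleation}. What the hole-free theorem actually draws from it is that the occupied part of each target segment grows from one seed and therefore stays a contiguous interval, which is exactly your first half. Several of your steps are sound. Flank signals arise only from the nucleus rule or the special condition. Your check of the special condition is also right: next to any flank wall, the only wall that is not fore-aft is the other wall of the same flank. Both branches of the condition therefore force that wall to be occupied, so the vertically adjacent target cell is already filled. Your non-joining half is correct under your reading, since a row $r$ with $(k+a,r)\notin S$ is never faced by a connection wall, though under that reading it is nearly trivial.

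\paragraph{The gap: the induction step.} You assert that every later attachment in $Q_{k+a}$ comes through a fore-aft wall or the special condition. Lateral attachments into a segment that has already started are not covered.
\begin{enumerate}
\item Algorithm~\ref{alg:identify_nominal_walls} makes a nucleus signal on \emph{all} free walls of its nucleating flank. So after the seed enters through, say, \wdfl{}, the nucleus keeps signaling on \wdrl{}, and a second robot can enter $Q_{k+a}$ laterally. This can even happen in the same time step, since simultaneous attachments are allowed, so ``the first robot'' need not be unique.
\item Nothing in Algorithm~\ref{alg:determine_role_using_adjacent_col} depends on which segment was built first. Robots on the other side of $Q_{k+a}$ (column $k+2a$) may therefore also carry a nucleation flag toward it. Your induction does not stop their flank signals from filling a cell of $Q_{k+a}$ that is not adjacent to the occupied interval.
\end{enumerate}

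\paragraph{How to close it.} You need two additions:
\begin{enumerate}
\item Observe that the two walls of one flank face vertically adjacent cells. Hence the parent nucleus's second attachment, or a simultaneous pair, is contiguous with the seed.
\item Use Lemma~\ref{lem:unique_nucleation} in the strong form you already invoke in Case~2, namely that $Q_{k+a}$ is connected to only one nucleus robot in total. Apply it to \emph{every} lateral attachment into $Q_{k+a}$, not only the first.
\end{enumerate}
You should also add the root's segment as a base case, since its origin is the root itself rather than a seed attached to a nucleus. With these patches your argument establishes the single-origin and contiguity property that the paper relies on.
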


Let a robot $i$ that has at least one free fore-aft wall be defined as a longitudinal frontier robot (LFR): $\text{free} \in \{s_{i,0}(t), s_{i,3}(t)\}$. Both robots $i$ and $j$ in \fig{}\ref{fig:delay} are LFRs. As such, an LFR exits Algorithm~\ref{alg:identify_nominal_walls} at line~\ref{alg:identify_nominal_walls_case1_endif} and can only attract attachments to its free fore-aft wall(s), upon which it ceases to be an LFR.

\begin{lemma}[Controlled longitudinal expansion]\label{lem:controlled_expansion}
    Suppose a robot $i$ with growth direction $g_i \neq 0$ in segment $Q_{p_i}$ has a neighbor $j$ belonging to segment $Q_{p_i-g_i}$, \textit{i.e.,} $|p_i - p_j| = 1$. If both are LFRs and share at least one free fore-aft wall index $w \in \{0, 3\}$ (\textit{i.e.}, $s_{i,w} = s_{j,w} = \text{free}$), Algorithm~\ref{alg:delay_signal_activation} ensures a passing-by robot $k$ joins robot $j$'s wall $w$ before robot $i$'s. That is, $p_k = p_j$ and $|q_k - q_j| = 1$.
\end{lemma}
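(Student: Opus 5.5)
Assume without loss of generality that $g_i = -1$; the case $g_i = 1$ follows by mirror symmetry, which exchanges walls $1 \leftrightarrow 5$ and $2 \leftrightarrow 4$. Then $j$ lies in column $p_i - g_i = p_i + 1$, to the right of $i$. In the axial frame of \fig{}\ref{fig:single_hex}, the right-column neighbors of $i$ are reached through wall $5$ (\wdfr{}) and wall $4$ (\wdrr{}). So $j$ is attached to $i$ either through wall $5$, with $x_j = x_i + \hat{u}_5$, or through wall $4$, with $x_j = x_i + \hat{u}_4$.

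The plan is a case split on the shared free fore-aft index $w \in \{0,3\}$, paired with the wall through which $j$ attaches.

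\emph{Case $w = 0$.} We need $j$ to be the \wdfr{} neighbor, i.e.\ $s_{i,5} = \text{occupied}$ and $s_{(5),0} = s_{j,0} = \text{free}$. Algorithm~\ref{alg:delay_signal_activation} then puts $0$ into $\tilde W_i$, so $i$ emits no signal on \wdf{}.

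\emph{Case $w = 3$.} Symmetrically, $j$ is the \wdrr{} neighbor and $3 \in \tilde W_i$.

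The mismatched pairings must be excluded geometrically. Take $j$ at $x_i + \hat{u}_4$ with both \wdf{} walls free. Then $j$'s front position $x_j + \hat{u}_0$ equals $x_i + \hat{u}_5$. That cell is $i$'s \wdfr{} neighbor and is adjacent to both $i$ and $j$. The same cell being both the target of the shared wall and the $4/5$ neighbor is exactly the configuration of \fig{}\ref{fig:delay}. I would either show that the delay still applies through the chain of neighbors along column $p_i+1$, or argue that the hypothesis ``$j \in Q_{p_i - g_i}$ is a neighbor with matching free wall'' forces the matching pairing. I expect to handle it the second way: since both walls face the same direction, the intended reading is that $j$ is the neighbor whose $w$-wall is adjacent to $i$'s $w$-target.

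Next I show that $j$ actually signals on $w$. Because $j$ is an LFR, Algorithm~\ref{alg:identify_nominal_walls} places $w \in W_j$ at line~\ref{alg:identify_nominal_walls_case1_if}. The delay for $j$ could only come from $j$'s own growth direction. If $g_j = g_i = -1$, the delay condition checks $j$'s neighbors in column $p_j+1$. That condition is unrelated to $i$, and $i$ never delays $j$. The potential obstacle is a chain of delays: $j$ may itself be delayed by a robot further right. I would argue by induction on the finite set of columns to the right, using the spanning tree of Lemma~\ref{lem:unique_nucleation}. Each delay points strictly away from the growth direction, so no delay cycle exists. The rightmost robot in the chain therefore signals, and the chain unwinds in order. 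Throughout, $i$'s wall $w$ stays dark for as long as $j$'s wall $w$ is free.

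Passing-by robots attach only to signaling walls. So the first robot $k$ to occupy either of the two free $w$-walls must dock at $j$, giving $x_k = x_j + \hat{u}_w$, and hence $p_k = p_j$ and $|q_k - q_j| = 1$ because $\hat{u}_0, \hat{u}_3 = (0,\pm1)$. Once $j$'s wall $w$ is occupied, $s_{(\cdot),w}$ is no longer free, and the delay on $i$ lifts at the next step.

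The main obstacle is the well-foundedness and liveness of delay chains, in particular ruling out that $j$ is indefinitely suppressed. I would discharge it by observing that delays propagate only opposite to $g$ along tree edges. I also need $g$ to be consistent along a column segment, as inherited from the parent nucleus, which I would check from Algorithm~\ref{alg:determine_growth_direction}.
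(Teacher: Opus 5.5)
Since the paper states this lemma without proof, I am judging your argument on its own. Your core step is sound. Take the matched configuration: $g_i=-1$, with $x_j=x_i+\hat u_5$ for $w=0$ or $x_j=x_i+\hat u_4$ for $w=3$ (mirrored for $g_i=1$). The status $s_{i,5}$ (resp.\ $s_{i,4}$) never reverts from occupied, so $w\in\tilde W_i$ for exactly as long as $s_{j,w}$ is free, and $i$ never signals on $w$ in that window.

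Your plan to show that the hypotheses force this pairing cannot work, because in the mismatched case the conclusion is false. The pairing has to be added as a hypothesis; it is precisely the adjacency Algorithm~\ref{alg:delay_signal_activation} tests, as in \fig{}\ref{fig:delay}. For a counterexample, enlarge the shape of \fig{}\ref{fig:high_level_assembly} so that $(0,2),(1,1)\in S$. The root $(0,0)$ is a right nucleus only, so $g=1$. Once $(0,1)$ has docked, the root signals on \wdfr{} and can attract $j$ to $(1,0)$ before anything reaches $(0,2)$. Then $i=(0,1)$ has $g_i=-1$, $j$ is its \wdrr{} neighbor in column $p_i-g_i$, and both have free \wdf{} walls. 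Yet $0\notin\tilde W_i$, because $s_{i,5}$ (facing $(1,1)$) is free. Meanwhile $g_j=1$ with $s_{j,1}$ occupied by $i$ gives $0\in\tilde W_j$, so $i$'s front fills first.

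Two further steps need repair.

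(a) Your reason why $j$ is not itself held back is that $g$ is constant along a column segment and $g_j=g_i$. Both claims are false. In the same figure the root has $g=1$ while $(0,1)$, in the same segment, has $g=-1$; and $j$ lies in another column anyway. The conclusion survives for a geometric reason. Robot $j$ consults column $p_i$ only when $g_j=-g_i$. In that case, using $\hat u_5+\hat u_1=\hat u_0$ and $\hat u_4+\hat u_2=\hat u_3$, the cell it checks for wall $w$ is $x_i+\hat u_w$, which is $i$'s own empty target. The same identity shows that delay chains never turn back, with no appeal to Lemma~\ref{lem:unique_nucleation}. You do not need liveness at all, though: the lemma is an ordering claim, and the motion-agnostic model guarantees no arrivals.

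(b) More seriously, your last step (``the first robot to occupy either $w$-wall must dock at $j$'') controls only $i$'s own signal. The cell $y=x_i+\hat u_w$ may have other in-assembly neighbors. For $w=0$, the robot at $x_i+\hat u_1$ has its \wdfr{} wall facing $y$ and its \wdrr{} wall occupied by $i$. If it is not an LFR, the special condition of Algorithm~\ref{alg:identify_nominal_walls}, or a right-nucleus flag, makes it signal into $y$ while $s_{j,0}$ is still free. The cells $x_i+2\hat u_0$ and $x_i+\hat u_0+\hat u_1$ raise the same issue. You must do one of two things:
\begin{itemize}
\item prove an invariant excluding such robots while $i$'s front is empty; or
\item weaken the conclusion to ``$i$ attracts nothing on wall $w$ before $s_{j,w}$ becomes occupied'', which is all Algorithm~\ref{alg:delay_signal_activation} alone guarantees.
\end{itemize}
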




\begin{corollary}\label{cor:no_unreachable_lfr}
    The controlled expansion guaranteed by Lemma~\ref{lem:controlled_expansion} ensures that robots attaching to LFRs do not cause $G(t)$ to be unreachable (\fig{}\ref{fig:delay}) or contain holes.
\end{corollary}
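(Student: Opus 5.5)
The statement to prove is Corollary~\ref{cor:no_unreachable_lfr}: attachments to LFRs never make $G(t)$ unreachable and never create holes. The plan is an induction on attachment events. The hypothesis is that $G(t)$ maintains reachability and is hole-free just before a passing-by robot $k$ docks at a free fore-aft wall $w\in\{0,3\}$ of an LFR $i$. We then show both properties persist for $X(t)\cup\{x_k\}$ with $x_k = x_i+\hat{u}_w$.

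Since the change is local, only the two properties near $x_k$ need checking. Reachability can only fail at positions $y\in Y(t)$ that are neighbors of $x_k$, since only their degree increases. A hole can only be created by a new simple cycle through $v_k$. Such a cycle must close through a neighbor of $x_k$ other than $i$.

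The neighbors of $x_k$ other than $x_i$ lie in the two adjacent columns $p_i\pm1$ and at $x_k+\hat{u}_w$ in column $p_i$. By Corollary~\ref{cor:segments_never_join}, column $p_i$ grows from a single seed, so the position $x_k+\hat{u}_w$ cannot be occupied from another origin. The cases therefore reduce to occupancy of the flank neighbors of $x_k$ in columns $p_i+g_i$ and $p_i-g_i$.

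\textbf{Side $p_i-g_i$.} The parent column's neighbor $j$ is either absent, so no new cycle and no new degree-4 opening arise there, or it is an LFR sharing wall $w$ with $i$. In the latter case Lemma~\ref{lem:controlled_expansion} says $j$'s wall $w$ is filled first. So when $k$ attaches to $i$, the position $x_j+\hat{u}_w$ is already occupied. The new triangle $i,j,k$ (or the quadrilateral through $x_j+\hat{u}_w$) then encloses no empty position of $S$. The opening between them is filled rather than left with four or more neighbors, which is exactly the configuration of \fig{}\ref{fig:delay}.

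\textbf{Side $p_i+g_i$.} That column is seeded later by $i$'s column through a unique nucleus (Lemma~\ref{lem:unique_nucleation}). Flank walls are activated only after fore-aft walls are occupied (Algorithm~\ref{alg:identify_nominal_walls}, lines \ref{alg:identify_nominal_walls_case1_if}--\ref{alg:identify_nominal_walls_case1_endif}). Hence any occupied neighbor there grew as a contiguous segment adjacent to $i$'s segment. Any position $y$ adjacent to $x_k$ on that side then has at most the neighbors $x_k$, one robot in column $p_i+g_i$, and one further robot, giving $\deg\le3$.

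For the \textbf{hole-free} part, we use the spanning-tree structure. Segments form a tree (Lemma~\ref{lem:unique_nucleation}), and each segment is contiguous (Corollary~\ref{cor:segments_never_join}). So any simple cycle through $v_k$ must stay within two adjacent, contiguous column segments. Such a cycle bounds only cells lying between contiguous runs, and these are occupied by the case analysis above.

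\textbf{Main obstacle.} I expect the hardest step to be ruling out a degree-4 opening when both neighboring columns are partially grown and $g_i=0$, since Algorithm~\ref{alg:delay_signal_activation} then delays nothing. I would first argue that $g_i=0$ forces $i$ to have no occupied flank neighbors at initialization. I would then use the fore-aft priority to show that no flank neighbor can appear until $i$'s fore-aft walls are full, which makes the case vacuous.
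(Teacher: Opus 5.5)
\paragraph{Verdict.} Your parent-side case is exactly what the paper relies on: the paper states the corollary without proof, as an immediate consequence of Lemma~\ref{lem:controlled_expansion}. The way you close the remaining cases, however, has a genuine gap, and the step you propose for the main obstacle is false.

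\paragraph{The $g_i=0$ case is not vacuous.} Fore-aft priority in Algorithm~\ref{alg:identify_nominal_walls} only limits which walls $i$ \emph{itself} signals. It does not keep $i$'s flank cells empty. The robot directly behind $i$ in its column has $i$ on one fore-aft wall. Once its other fore-aft wall is occupied or null, it stops being an LFR. If it is a nucleus, it can then seed a cell that is $i$'s \wdrr{} or \wdrl{} neighbor while $i$ is still an LFR.

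Take $S=\{(0,0),(0,1),(0,2),(1,0),(1,1)\}$ as an example. The root is a right nucleus, because the rounded-down midpoint of $\{(1,0),(1,1)\}$ lies on row $0$. The robot docking at $(0,1)$ sees its right walls free, not occupied, since the root was an LFR until then and had not signalled its flank. Algorithm~\ref{alg:determine_role_using_adjacent_col} returns false for it because $(0,0)\in Q_0$, so it gets $g=0$. The root is now no longer an LFR and seeds $(1,0)$, the \wdrr{} neighbor of $(0,1)$, possibly before $(0,2)$ is filled.

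\paragraph{What actually handles it.} The protection comes from Algorithm~\ref{alg:delay_signal_activation} run by the \emph{child-side} robot. The seed at $(1,0)$ initializes with its left flank occupied, so it gets $g=1$. It therefore delays its \wdf{} wall until the \wdf{} wall of $(0,1)$ is filled. This is Lemma~\ref{lem:controlled_expansion} with the roles of $i$ and $j$ exchanged.

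\paragraph{Side $p_i+g_i$.} The same idea is missing from this paragraph. To bound the degree of a cell $y$ beside $x_k$ in column $p_i+g_i$, you must also rule out occupied cells of column $p_i+2g_i$ adjacent to $y$. Nothing you cite does this. You need an invariant, maintained by induction over attachments down the whole spanning tree: a robot in a child segment is placed only after its parent-side neighbors in $S$ are occupied. The child's own delays enforce it, since its growth direction points away from its parent, and with it both the child side and $g_i=0$ follow.

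\paragraph{Hole-free step.} This step is wrong as stated. A simple cycle through $v_k$ need not stay within two adjacent column segments; the boundary cycle of a filled three-column block visits all three columns. So the tree of segments does not confine cycles. The correct reduction is local. Since $G(t)$ was hole-free, any newly enclosed empty region must contain an unoccupied neighbor of $x_k$. This cannot happen if the occupied neighbors of $x_k$ form one contiguous arc around $x_k$, and that is the condition your case analysis should verify.

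In particular, your claim that an absent $j$ yields no new cycle needs an argument that $x_j+\hat{u}_w$ is unoccupied while $x_j$ is empty. The absence of $j$ alone does not give this. It follows from contiguity of the parent run together with the invariant above.
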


We now establish the main guarantees for Problem~\ref{prob:problem_statement}.

\begin{theorem}[Reachability]\label{thm:reachability}
    Following the algorithms in Section~\ref{sec:methodology}, the assembly satisfies Definition~\ref{def:reachability}.
\end{theorem}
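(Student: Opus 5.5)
We argue by induction on the sequence of attachment events. Between attachments $G(t)$ is unchanged, so only attachments can create an unreachable position. The base case is the root alone at $(0,0)$. Every $y\in Y(t)$ is then adjacent to exactly one occupied position, so $\deg(v_j)=1$ and reachability holds trivially.

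For the inductive step, suppose $G(t)$ maintains reachability and a passing-by robot $k$ attaches at $y^\ast$ through a signaling wall of some robot $i$. Only positions $y\in Y(t^+)$ with $y\in\mathcal{N}(y^\ast)$ can change their prospective degree. We must show that each such $y$ has at most three occupied neighbours after the attachment. A hexagonal position with four or more occupied neighbours out of six must have its occupied neighbours spread over at least two columns, since each column contributes at most two neighbours to a given hexagon (fore and aft). In particular it has at least one occupied neighbour in the column to its left or right and also a fore or aft neighbour, or it has two occupied neighbours in one adjacent column and one in the other. We therefore split on the wall through which $k$ attached.

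\emph{Case 1: fore-aft attachment.} Here $i$ is an LFR and, by Algorithm~\ref{alg:identify_nominal_walls}, signals only on its free fore-aft walls. The new robot $k$ lies in column $p_i$. Any endangered $y$ lies in column $p_i\pm1$ and is flanked by $k$ and by robots of that adjacent column. If that column contains robots adjacent to $y$, they are neighbours of $i$ in a column $p_i-g_i$ or $p_i+g_i$.
\begin{itemize}
\item If the column is $p_i-g_i$, Lemma~\ref{lem:controlled_expansion} guarantees that the shared free fore-aft wall of $j$ is filled before that of $i$. This precludes the configuration of \fig{}\ref{fig:delay}.
\item If the column is $p_i+g_i$, the column is a child segment in the spanning tree. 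By Algorithm~\ref{alg:identify_nominal_walls}, a nucleus seeds a child only once its own fore-aft walls are occupied. Hence such neighbours cannot yet face a fore-aft frontier of $i$.
\end{itemize}
Corollary~\ref{cor:no_unreachable_lfr} packages exactly this conclusion for attachments to LFRs.

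\emph{Case 2: flank attachment.} Here $i$ has no free fore-aft walls, and $k$ is either a seeding robot of a new segment $Q_{p_i\pm1}$ or a filler triggered by the special condition.
\begin{itemize}
\item For a seed: by Lemma~\ref{lem:unique_nucleation} and Corollary~\ref{cor:segments_never_join}, column $p_k$ held no other robots of the segment $Q_{p_k}$ before the seed arrived. The new neighbours of $k$ are therefore only robots in column $p_i$, namely $i$ and possibly its fore/aft neighbour across the same flank. Each position adjacent to $k$ then gains at most the count implied by a single new vertex in an otherwise empty column. We check that its count stays at most three by using that its other neighbours lie in column $p_i$ (at most two) plus $k$.
\item For a special-condition filler: the free flank wall lies between two occupied walls, or between an occupied wall and a null fore-aft wall. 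The filled position therefore has at most three occupied neighbours, so $\deg(v_k)\le 3$. Its neighbours in $Y$ are bounded by the same column-counting.
\end{itemize}

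The main obstacle is the joint argument in Case 1 when robots in both adjacent columns surround a target $y$. I would handle it with an invariant maintained inductively: every segment in the assembly is a contiguous interval of its column, and for each tree edge the child segment's occupied interval lies within the rows spanned by the parent's neighbouring occupied interval shifted by the hex offset. Contiguity follows from Corollary~\ref{cor:segments_never_join}, and the containment follows from Lemma~\ref{lem:controlled_expansion}. With this invariant, any unoccupied $y$ in $S$ adjacent to the assembly sits at the end of an interval, and its occupied neighbours are at most one from its own column plus at most two from a single adjacent column. That gives $\deg\le3$. I expect the delicate part to be verifying that the invariant survives a seed attaching while the parent column is still growing in the opposite direction.
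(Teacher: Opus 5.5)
Your decomposition matches the paper's. Fore-aft attachments are attachments to LFRs, handled by Lemma~\ref{lem:controlled_expansion} and Corollary~\ref{cor:no_unreachable_lfr}. Flank attachments split into nucleus seeds (Lemma~\ref{lem:unique_nucleation}) and special-condition fillers.

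The genuine gap is the filler case. The special condition only certifies that certain neighbours \emph{are} occupied, so it gives no upper bound. Your bound $\deg(v_k)\le 3$ holds only because $y^\ast\in Y(t)$ and the induction hypothesis applies. More seriously, ``the same column-counting'' does not carry over from the seed case. There you used that $k$'s own column and the far column are still empty near $k$, and for a filler neither is true. The free position beyond $k$ in its own column can see $k$, two robots of column $p_i$, and robots of the far column, so counting alone does not exclude four or more.

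The missing observation, which is exactly how the paper closes this case, is that the special condition of Algorithm~\ref{alg:identify_nominal_walls} always forces the bordering flank wall on the same side to be occupied. For \wdfl{} the bordering walls are \wdf{} and \wdrl{}; a null bordering wall must be fore-aft, so \wdrl{} is occupied. The same holds symmetrically for \wdrl{}, \wdrr{}, \wdfr{}. Hence the filled position is the free \wdf{} or \wdr{} position of an occupied robot $z$ in the flank column. That $z$ is an LFR, so the filler is just a longitudinal attachment to $z$ and is already covered by your Case~1.

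There are three smaller issues.

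\emph{Column $p_i+g_i$ in Case~1.} Your reasoning here is a non sequitur. Since $i$ still has a free fore-aft wall, the nucleus that seeded the child segment is some other robot of $Q_{p_i}$. The nucleus rule therefore does not stop child robots from growing into $i$'s neighbourhood. What keeps them from outpacing $i$ is Lemma~\ref{lem:controlled_expansion} applied to the child robot, with $i$ as its parent-side neighbour. You also leave $g_i=0$ unaddressed.

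\emph{Your ``main obstacle''.} You end Case~1 by invoking Corollary~\ref{cor:no_unreachable_lfr}, which the paper takes to cover every attachment to an LFR, including positions flanked by both adjacent columns. At the paper's level of rigor, then, the obstacle is already absorbed and no invariant is needed. If you want to prove it from scratch, the invariant must be strict: every flank neighbour of a child robot that lies in the parent segment is occupied. Mere adjacency to the parent's interval is too weak. It would allow a child robot on the \wdfr{} wall of the parent's topmost robot $z$. Together with $z$ and two robots of the column on the other side, which is allowed to outpace, that gives the free position in front of $z$ four occupied neighbours.

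\emph{The seed case.} You need to justify that the far-side column has no robots adjacent to $k$ yet. Hole-freeness of $S$ makes the segment adjacency graph a tree. The segments there touching $Q_{p_k}$ are therefore its children and cannot yet have been seeded.
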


\begin{proof}
    By Lemma~\ref{lem:unique_nucleation}, let segment $Q_{p_i}$ be the child of $Q_{p_j}$ in the spanning tree, \textit{i.e.,} seeded by one nucleus robot $j$ in $Q_{p_j}$. We prove reachability separately in longitudinal and lateral expansion.

    \begin{enumerate}
        \item \textit{Longitudinal}: From Corollary~\ref{cor:no_unreachable_lfr}, the longitudinal growth of segment $Q_{p_i}$ will not outpace its parent segment $Q_{p_j}$. The robot roles of the LFRs do not matter---without satisfying free fore-aft walls, flank signals will never be activated (Algorithm~\ref{alg:identify_nominal_walls}, lines \ref{alg:identify_nominal_walls_case1_if}--\ref{alg:identify_nominal_walls_case1_endif}). Thus, longitudinal expansion will result in reachable positions.

        \item \textit{Lateral}: A nucleus robot can only seed one segment per flank (Lemma~\ref{lem:unique_nucleation}). Suppose there exists a target segment $Q_{p_i}$ where $|Q_{p_i}| > 1$ and let robot $i$ be the first passing-by robot attracted to a flank wall of nucleus robot $j$, forming segment $Q_{p_i}$. Because robot $i$ is the first of its segment $Q_{p_i}$, no unreachable positions are created. On the other hand, a regular robot signals on a flank wall only when it is \emph{not} an LFR and is adjacent to an LFR (Algorithm~\ref{alg:identify_nominal_walls}, lines \ref{alg:identify_nominal_walls_case2_for}--\ref{alg:identify_nominal_walls_case2_endfor}). The presence of the adjacent LFR implies the longitudinal expansion case (for said LFR's segment), which is always reachable.
    \end{enumerate}\qed
\end{proof}

\begin{theorem}[Hole-free]
    Following the algorithms in Section~\ref{sec:methodology}, the assembly satisfies Definition~\ref{def:hole-free}.
\end{theorem}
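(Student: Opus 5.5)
We argue by induction on the attachment events, since $G(t)$ only changes when a passing-by robot docks. At $t=0$ the assembly is the single root robot, which has no cycles and is therefore hole-free. Assume $G(t)$ is hole-free, and let robot $k$ attach at position $y \in Y(t)$ to produce $G(t^+)$. Every simple cycle already in $G(t)$ bounds only occupied positions, and positions are never vacated, so it stays harmless. A new violating cycle must pass through $v_k$. Such a cycle enters and leaves $v_k$ through two of its neighbors, so a hole can appear only when $v_k$ attaches to at least two robots whose connecting path in $G(t)$, closed through $y$, encloses an unoccupied target position. The proof therefore reduces to showing that whenever $\deg(v_k) \geq 2$, every cycle through $v_k$ encloses no position of $S \setminus X(t^+)$.

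The cases follow the kind of wall that attracted $k$, matching the split used for Theorem~\ref{thm:reachability}.

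\emph{Case 1: attachment to a fore-aft wall of an LFR.} This is exactly the setting of Corollary~\ref{cor:no_unreachable_lfr}, which states that such attachments introduce no holes. Concretely, Lemma~\ref{lem:controlled_expansion} gives that column $p_i$ never advances longitudinally past its parent column $p_i-g_i$. So whenever $y$ touches a robot in an adjacent column, the positions between $y$ and its neighbors are already filled, and the only new faces are the triangular faces of the hexagonal lattice. These faces contain no lattice positions.

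\emph{Case 2: first robot of a new segment attaching to a nucleus flank.} I will use Corollary~\ref{cor:segments_never_join}: each segment grows from a single seed and never merges with another segment of the same column. Combined with Lemma~\ref{lem:unique_nucleation}, this makes the segment adjacency structure a tree. Suppose a cycle through $v_k$ enclosed an empty position. Projecting the cycle onto segments would give either a closed walk that is not backtracking in the segment tree, which is impossible, or two distinct connections between the same pair of adjacent segments. I must show the second possibility again bounds only triangular faces. The key fact is that, within a pair of adjacent column segments, both of which are contiguous, every lattice position lying between two adjacency edges is itself in one of the two segments. Contiguity then forces those positions to be occupied, or to be neither enclosed nor in $S$.

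\emph{Case 3: special-condition flank signal (Remark~\ref{rem:special_condition}).} This fires only next to an LFR, so it falls under Case 1, as in Theorem~\ref{thm:reachability}.

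The main obstacle is Case 2, specifically turning "segments form a spanning tree" into "every cycle is a union of lattice triangles." The difficulty is that a segment that is contiguous in the target may be only partially grown at time $t$. My first attempt will be an invariant: at all times, the occupied part of each segment is a contiguous interval containing its seed. This holds because growth happens only at free fore-aft walls of occupied robots. With the invariant, any cycle crossing between two adjacent columns at rows $q_1<q_2$ has all intermediate rows of both columns occupied. An induction over the cycle's segment-tree projection then shows its interior is fully occupied, which gives Definition~\ref{def:hole-free} for all $t$.
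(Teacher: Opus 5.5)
\textbf{There is a gap: the hole-freeness of $S$ is never used.} You build on the same key fact as the paper's proof: by Lemma~\ref{lem:unique_nucleation} and Corollary~\ref{cor:segments_never_join}, each target segment is seeded once, and its occupied part stays a contiguous interval. But you never use the hypothesis that $S$ is hole-free, and the step that needs it is justified by the wrong results. In Case~2 you claim that Corollary~\ref{cor:segments_never_join} combined with Lemma~\ref{lem:unique_nucleation} makes ``the segment adjacency structure a tree.'' Those results constrain only \emph{seeding}: each segment has a unique parent. But $G(t)$ contains every lattice adjacency between occupied cells. A cycle in $G(t)$ can therefore cross between any two occupied segments in neighboring columns, whether or not one seeded the other. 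For the segment projection of a cycle to be forced to backtrack, you need the adjacency graph of the target segments of $S$ itself to be acyclic. That follows from $S$ having no holes: a cycle of segments would surround the non-$S$ cells separating two distinct segments of some column. It does not follow from Lemma~\ref{lem:unique_nucleation}. As written, nothing in your argument distinguishes a hole-free $S$ from an annulus. There, a cycle around the hole projects to a non-backtracking closed walk, and the $S$-cells between it and the hole need not be occupied. The paper's proof brings in hole-freeness of $S$ at exactly this point.

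\textbf{How to finish once the hypothesis is added.} The induction over attachment events and the wall-type case split then become unnecessary; the contiguity invariant alone suffices. Justify that invariant more carefully than ``growth happens only at free fore-aft walls.'' Seeds arrive through nucleus flank walls. Every later attachment to a segment (via a fore-aft signal, a nucleus's second flank wall, or a special-condition flank signal) lands at a fore-aft position of an already occupied cell of that segment. Now suppose a cycle $C$ encloses an unoccupied $c \in S$ in column $k$. Cells of columns $k-1$ and $k+1$ are never adjacent, so $C$ meets the vertical line through column $k$ only at column-$k$ cells and the column edges joining them. Hence $C$ contains column-$k$ cells $u$ above $c$ and $d$ below $c$. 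Every cell strictly between $u$ and $d$ is enclosed by $C$, and so lies in $S$ because $S$ is hole-free. Then $u$, $c$, $d$ belong to one segment whose occupied part is not an interval, a contradiction. This is the paper's ``a hole is a gap within a single column segment'' argument, made precise.
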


\begin{proof}
    We show this using proof-by-contradiction. Recall that $S$ is assumed to be hole-free. From Corollary~\ref{cor:segments_never_join}, a column segment cannot be created from combining segments within its column---its longitudinal expansion must start from a seeding robot attached to a nucleus robot in the adjacent column.

    Suppose that $G(t)$ contains a hole---enclosed unoccupied positions in $S$---within column $k$. The hole is effectively a gap between two segments within the column; the segments must have originated from two seeding robots, each attached to a different nucleus robot in the adjacent column (Lemma~\ref{lem:unique_nucleation}). But this is a contradiction: $S$ is hole-free, so column $k$ must be seeded by only one nucleus robot in an adjacent column. Therefore, $G(t)$ is always hole-free.\qed
\end{proof}

\section{Experimental Results}\label{sec:experimental_results}

We evaluated our approach with Monte Carlo experiments (correctness) and a physics-based experiment in NVIDIA Isaac Sim (feasibility).

For our Monte Carlo experiments (\fig{}\ref{fig:monte_carlo_example}), robot motion is abstracted: passing-by robots instantaneously populate random openings signaled by in-assembly robots. We varied the number of simultaneous robot attachments from 1 to 4 and tested our approach on 40{,}000 randomized swarm shapes, with the largest consisting of more than 250 robots. Huddle successfully formed the desired assembly for all 160{,}000 trials.

\begin{figure}
    \centering
    \begin{subfigure}{0.325\columnwidth}
        \centering
        \begin{overpic}[width=0.65\textwidth]{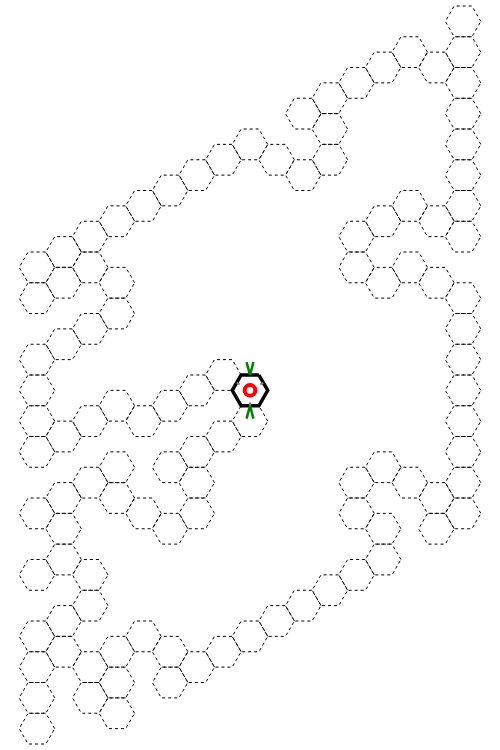}
            \put(-3,92){\small(a) Step 0}  
        \end{overpic}
        \phantomsubcaption{}\label{fig:monte_carlo_example_1}
    \end{subfigure}
    \hfill
    \begin{subfigure}{0.325\columnwidth}
        \centering
        \begin{overpic}[width=0.65\textwidth]{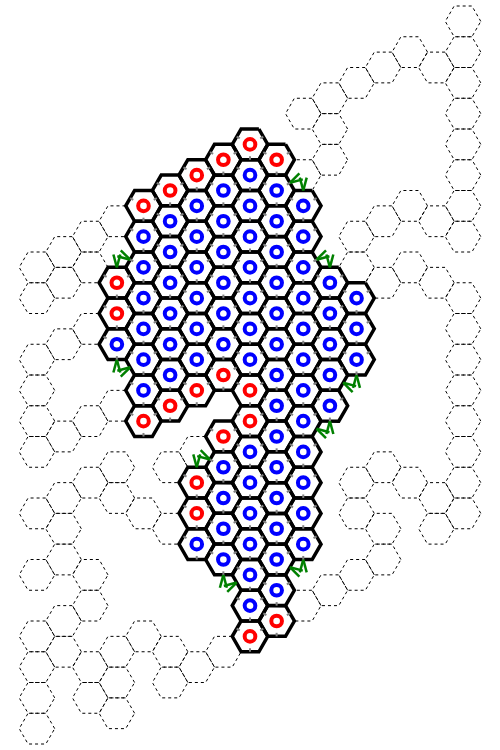}
            \put(-3,92){\small(b) Step 95}  
        \end{overpic}
        \phantomsubcaption{}\label{fig:monte_carlo_example_2}
    \end{subfigure}
    \hfill
    \begin{subfigure}{0.325\columnwidth}
        \centering
        \begin{overpic}[width=0.65\textwidth]{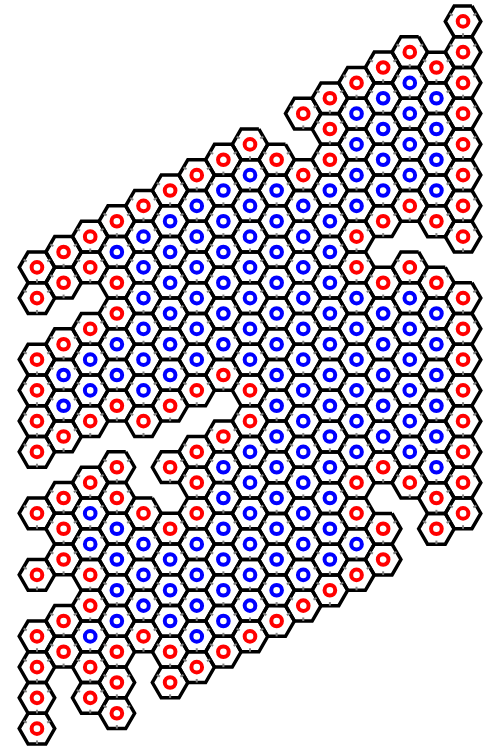}
            \put(-3,92){\small(c) Step 255}  
        \end{overpic}
        \phantomsubcaption{}\label{fig:monte_carlo_example_3}
    \end{subfigure}
    \caption{$256$-robot assembly example from the Monte Carlo experiments (one robot attachment per step). Dashed hexagons represent $X_b$, which are eventually populated by robots marked with red circles; hexagons marked with blue circles are interior robots. Note that our approach does not distinguish between perimeter and interior robots.}\label{fig:monte_carlo_example}
\end{figure}

In the physics-based validation, our hexagonal robot (0.18 \si{m} sides) uses IR transceivers for both signaling (up to 3 \si{m}) and neighbor communication, with simulated magnetic attachment. As Huddle is motion-agnostic, our robots are omnidirectional, moving at a top speed of 0.5 \si[per-mode=symbol]{\m\per\s} in a diffusive motion (\fig{}\ref{fig:motion_state_diagram}).

\begin{figure}
    \centering
    \begin{subfigure}{\textwidth}
        \centering
        \def\svgwidth{0.8\textwidth}
        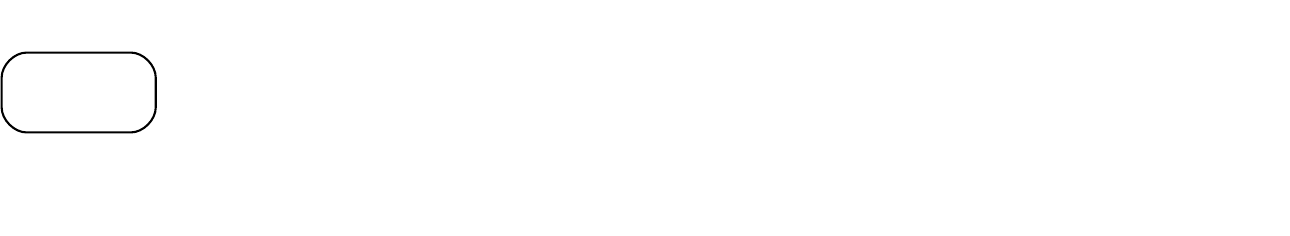
        \begin{overpic}[width=0.01\textwidth]{figures/blank.png}
            \put(0,0){\hspace{-305pt}\raisebox{40pt}{\small(a)}}  
        \end{overpic}
        \phantomsubcaption{}\label{fig:motion_state_diagram}
    \end{subfigure}
    \par \smallskip
    \begin{subfigure}{\textwidth}
        \centering
        \def\svgwidth{0.75\textwidth}
\newcommand{\labelfontsize}{8pt}
\newcommand{\labeltitlefontsize}{9pt}
\newcommand{\thresholdlabelhspace}{-15pt}
\begingroup%
  \makeatletter%
  \providecommand\color[2][]{%
    \errmessage{(Inkscape) Color is used for the text in Inkscape, but the package 'color.sty' is not loaded}%
    \renewcommand\color[2][]{}%
  }%
  \providecommand\transparent[1]{%
    \errmessage{(Inkscape) Transparency is used (non-zero) for the text in Inkscape, but the package 'transparent.sty' is not loaded}%
    \renewcommand\transparent[1]{}%
  }%
  \providecommand\rotatebox[2]{#2}%
  \newcommand*\fsize{\dimexpr\f@size pt\relax}%
  \newcommand*\lineheight[1]{\fontsize{\fsize}{#1\fsize}\selectfont}%
  \ifx\svgwidth\undefined%
    \setlength{\unitlength}{468.99648591bp}%
    \ifx\svgscale\undefined%
      \relax%
    \else%
      \setlength{\unitlength}{\unitlength * \real{\svgscale}}%
    \fi%
  \else%
    \setlength{\unitlength}{\svgwidth}%
  \fi%
  \global\let\svgwidth\undefined%
  \global\let\svgscale\undefined%
  \makeatother%
  \begin{picture}(1,0.30121268)%
    \lineheight{1}%
    \setlength\tabcolsep{0pt}%
    \put(0,0){\includegraphics[width=\unitlength,page=1]{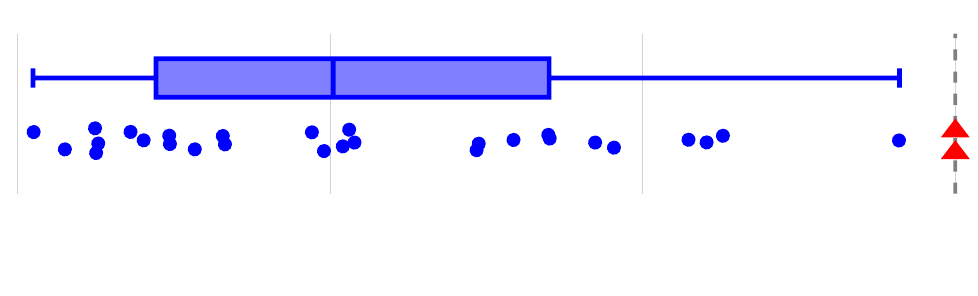}}%
    \put(0.01820416,0.07356131){\makebox(0,0)[t]{\fontsize{\labelfontsize}{0pt}\selectfont\lineheight{1.25}\smash{\begin{tabular}[t]{c}15\end{tabular}}}}%
    \put(0.33803598,0.07356131){\makebox(0,0)[t]{\fontsize{\labelfontsize}{0pt}\selectfont\lineheight{1.25}\smash{\begin{tabular}[t]{c}30\end{tabular}}}}%
    \put(0.65786779,0.07356131){\makebox(0,0)[t]{\fontsize{\labelfontsize}{0pt}\selectfont\lineheight{1.25}\smash{\begin{tabular}[t]{c}45\end{tabular}}}}%
    \put(0.97769961,0.07356131){\makebox(0,0)[t]{\fontsize{\labelfontsize}{0pt}\selectfont\lineheight{1.25}\smash{\begin{tabular}[t]{c}60\end{tabular}}}}%
    \put(0.50005149,0.01934983){\makebox(0,0)[t]{\fontsize{\labeltitlefontsize}{0pt}\selectfont\lineheight{1.25}\smash{\begin{tabular}[t]{c}Completion Time (min)\end{tabular}}}}%
    \put(0.93907942,0.27699568){\makebox(0,0)[t]{\lineheight{1.25}\smash{\begin{tabular}[t]{c}Timeout\end{tabular}}}}%
  \end{picture}%
\endgroup%

        \begin{overpic}[width=0.01\textwidth]{figures/blank.png}
            \put(0,0){\hspace{-285pt}\raisebox{66pt}{\small(b)}}  
        \end{overpic}
        \phantomsubcaption{}\label{fig:completion_time}
    \end{subfigure}
    \caption{(a) Robot diffusion motion state diagram. (b) Completion time of a 107-robot assembly forming ``$\mathcal{H}$'', where \nicefrac{2}{30} trials timed out (1 robot remaining).}
\end{figure}

Once a robot joins the assembly by attaching to $\geq 1$ in-assembly robot, it exchanges information with its neighbor(s) at every time step. Each robot $i$ transmits two elements: (1) its wall status vector $s_i(t)$ (encoded as a single 16-bit unsigned integer), and (2) its hexagonal coordinates $x_i = (p_i, q_i)$ combined with the unit vector $\hat{u}_{i,w}$ of the transmitting wall $w$ (jointly encoded as a 32-bit unsigned integer). Communication only occurs between in-assembly robots (as noted in earlier sections).

Due to continuous-space motion, the orientation of a joining robot is unlikely to match the attracting robot's. To that end, joining robots transform their wall directions to a \textit{north-front} orientation (\fig{}\ref{fig:single_hex}) using the attracting robot's $\hat{u}_w$, ensuring Huddle works as described in Section~\ref{sec:methodology}.

We conducted an experiment with 107 robots in a square arena of 179 \si{\m^2} to form a highly non-convex shape: the calligraphic letter ``$\mathcal{H}$'' (\fig{}\ref{fig:isaac_sim_screenshots}). Over 30 trials, only two did not complete the shape within a simulated hour, timing out with 106 in-assembly robots (\fig{}\ref{fig:completion_time}). While this shows a high success rate ($>99\%$ of the shape is assembled in all trials), more efficient motion strategies can accelerate the assembly's completion, as we discuss in Section~\ref{sec:discussion}. We include a video demonstrating complete shape assembly in the supplementary materials.

\begin{figure*}
    \centering
    \begin{subfigure}{0.328\linewidth}
        \centering
        \includegraphics[width=\linewidth]{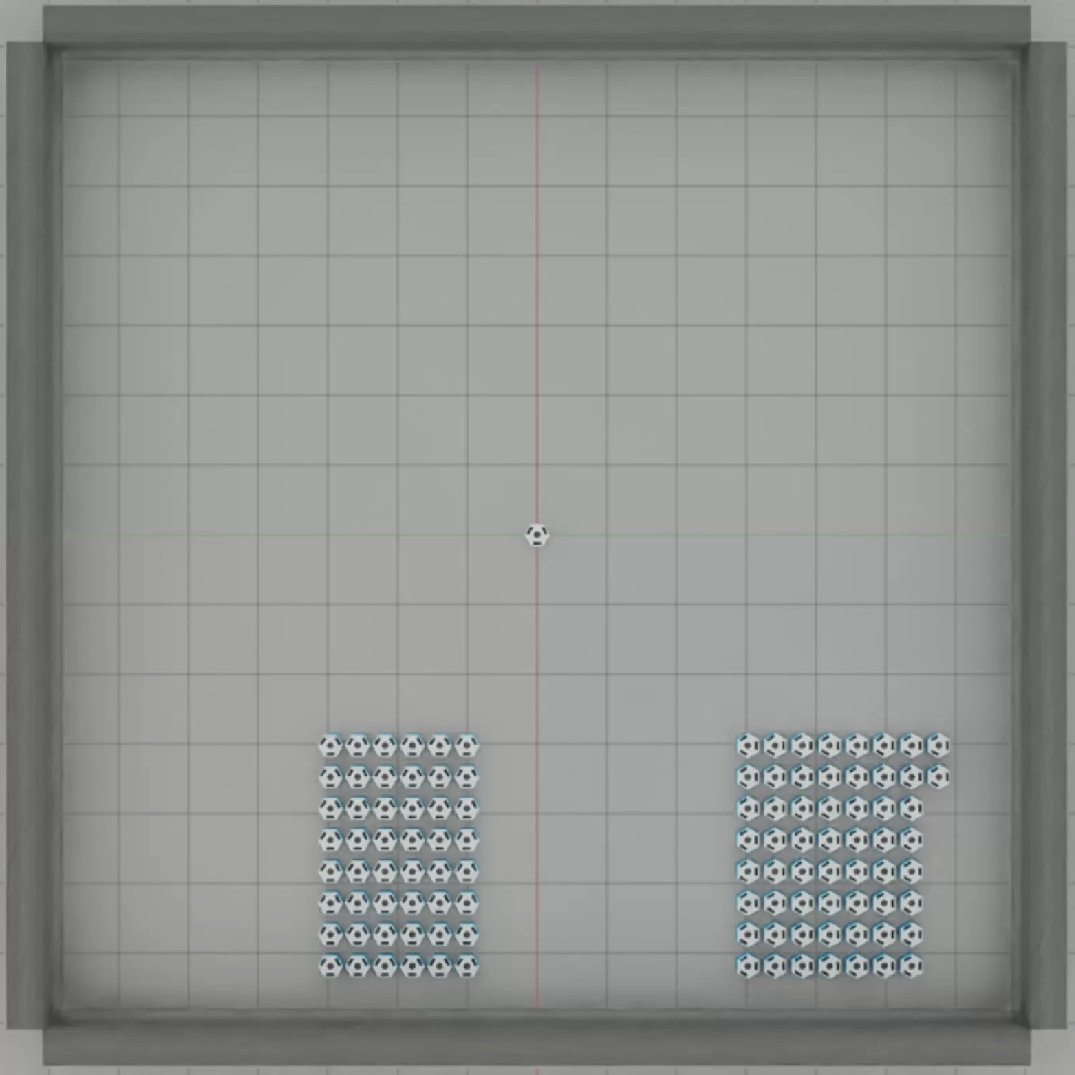}
        \subcaption{}
    \end{subfigure}
    \hfill
    \begin{subfigure}{0.328\linewidth}
        \centering
        \includegraphics[width=\linewidth]{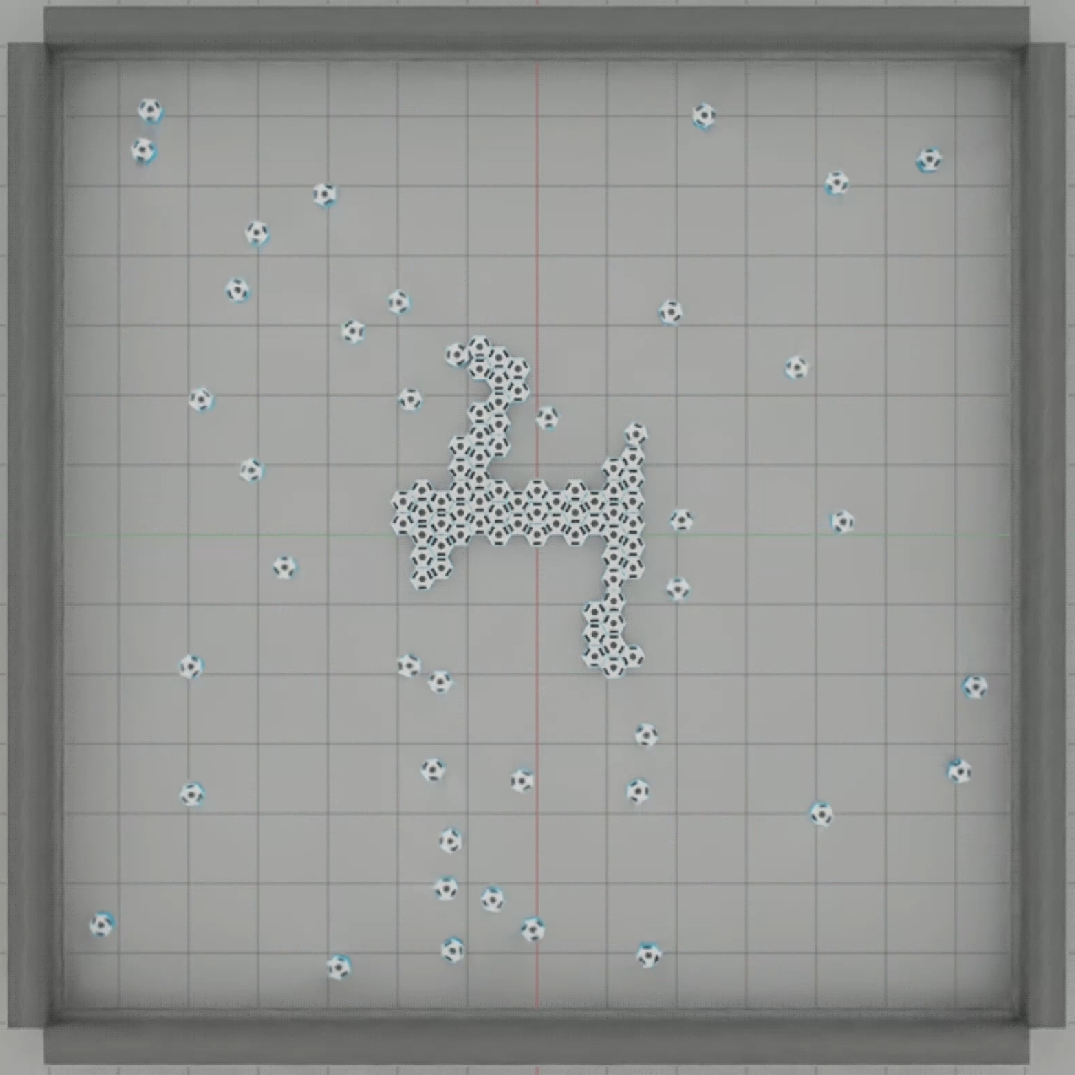}
        \subcaption{}
    \end{subfigure}
    \hfill
    \begin{subfigure}{0.328\linewidth}
        \centering
        \includegraphics[width=\linewidth]{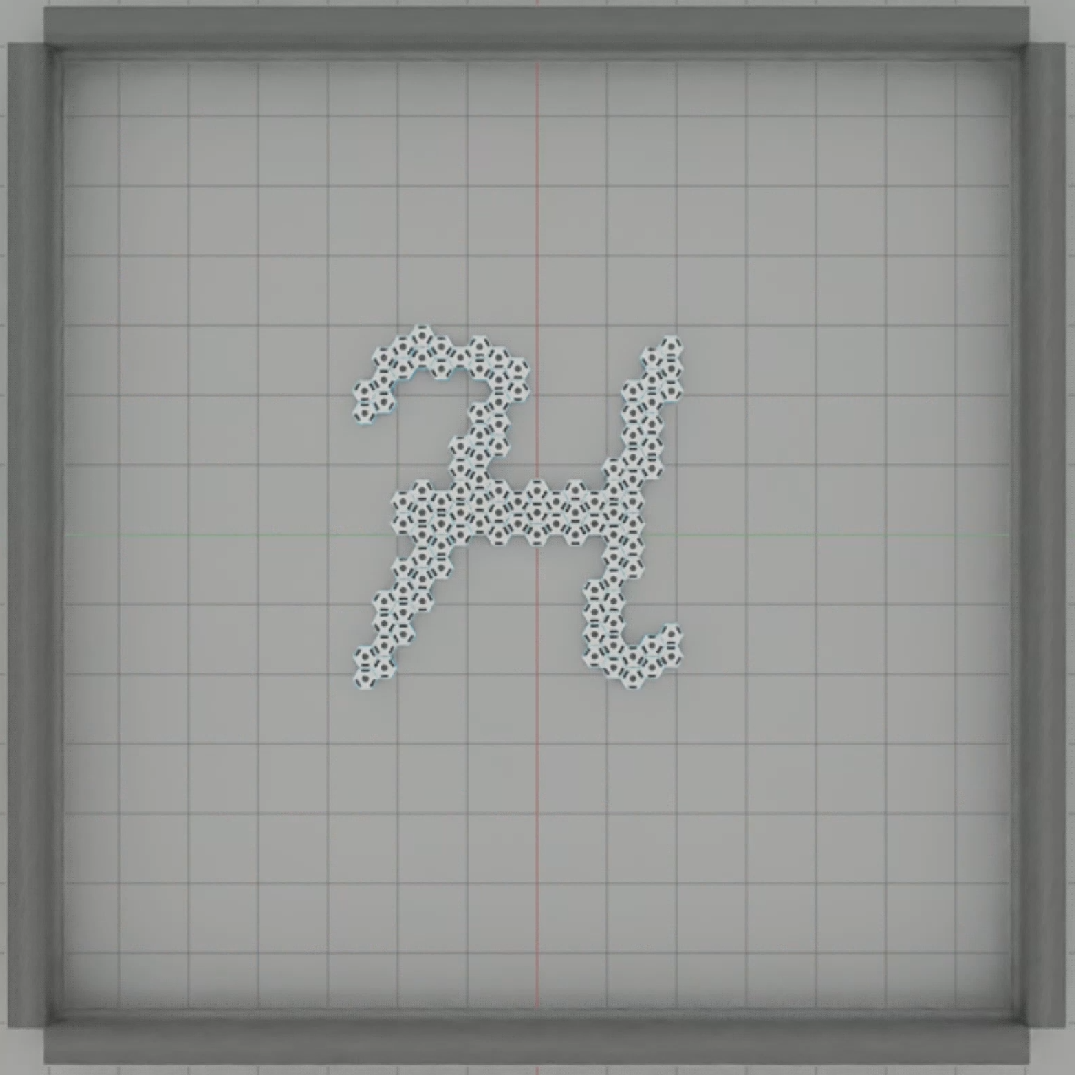}
        \subcaption{}
    \end{subfigure}
    \caption{107 robots forming ``$\mathcal{H}$''. (a) We place the root robot at the world origin, while the rest of the robots start in two clusters. (b) The robots diffuse within an arena bounded by four walls, joining the assembly as attracted by signals. (c) Each robot eventually finds an opening to attach, completing the assembly.}\label{fig:isaac_sim_screenshots}
\end{figure*}
\section{Discussion}\label{sec:discussion}

Using only the shape perimeter, Huddle enables the parallel, asynchronous assembly of arbitrary shapes by robots with continuous motion. The robots do not need to be identified or require pre-determined starting locations or goal assignments. We make no assumptions about the robots' motion, allowing order-independent assembly as long as they can encounter signals---intentionally or by chance. Once part of the assembly, each robot communicates limited information only with their nearest neighbors.

While pose localization is not required, the assembly completion rate can improve by using robots with more efficient motion control strategies that may rely on localization. This allows for robot navigation that can either take a decentralized form---such as circling the assembly (similar to~\cite{werfelDistributedConstructionMobile2006})---or a more centralized one---such as clustering locations (near assembly openings) computed and disseminated by a server.

That said, there are practical limitations to using Huddle. In specifying the shape, the designer must be careful to avoid shapes that can cause signal occlusion during assembly; shapes containing one-robot-wide gaps tend to cause this---especially if those gaps are long continuous channels (\fig{}\ref{fig:monte_carlo_example_3}). Huddle also does not account for physical effects of signals which, depending on the medium, may be significant. For instance, unlike LED signals, two IR signals targeting the same opening may interfere (\fig{}\ref{fig:delay}).

Though Huddle is designed with hexagonal units in mind, the approach extends to square units, another shape that admits monohedral tessellation (tiling by congruent copies). To do this, we adjust Definition~\ref{def:reachability} to allow for at most 2 simultaneous connections. The four-wall structure simplifies flank signaling logic, as each lateral direction corresponds to a single wall rather than two. However, Huddle does not extend trivially to triangular units, the remaining regular polygon admitting monohedral tessellation.

Our formulation using hexagons lends itself to circular robots as well, since they can each have at most 6 neighbors (2-D kissing number). If rules about robot orientation can be enforced, Huddle can be extended to shape formation applications, like in \cite{rubensteinProgrammableSelfassemblyThousandrobot2014,yangParallelShapeFormation2022,liDecentralizedProgressiveShape2019}, where a robot's circular footprint can either be its physical shape or safety boundary.
\section{Conclusion}\label{sec:conclusion}
We presented Huddle, a parallel shape assembly algorithm that enables decentralized, minimalistic robots to form arbitrary shapes. With only the perimeter as input, Huddle guides assembly expansion without bespoke motion strategies, relying instead on signaling mechanisms alone. We proved that Huddle activates signals in a controlled manner such that the assembly will never contain unreachable openings or gaps. We also demonstrated its effectiveness with a high success rate in forming a highly non-convex shape despite using robots with unsophisticated motion. As future work, we aim to validate Huddle on real robotic platforms and explore avenues to reconfigure the assembly.

\bibliographystyle{unsrtnat}
\bibliography{references}

\end{document}